\documentclass[11pt]{article}
\usepackage[margin=1.4in]{geometry}
\usepackage[utf8]{inputenc}

\usepackage{amsmath,amsthm}
\usepackage{amsxtra}
\usepackage{amsfonts,amssymb,bm}
\usepackage{stmaryrd}
\usepackage[normalem]{ulem}
\usepackage{subfigure}
\usepackage{graphicx}
 \graphicspath{./} 
\usepackage{epstopdf}
\usepackage{url}
\usepackage{listings}
\usepackage{xcolor}
\usepackage{hyperref}
\usepackage[capitalize,noabbrev]{cleveref}

%
\theoremstyle{plain}
\newtheorem{theorem}{Theorem}
\newtheorem{proposition}[theorem]{Proposition}
\newtheorem{lemma}[theorem]{Lemma}
\newtheorem{corollary}[theorem]{Corollary}
\newtheorem{definition}[theorem]{Definition}

\theoremstyle{remark}
\newtheorem{remark}[theorem]{Remark}

\usepackage{algorithm}
\usepackage{algorithmic}

\usepackage{appendix}
\usepackage{braket}
\usepackage{multirow}

\usepackage{comment}

\usepackage{tikz}
\usetikzlibrary{shapes.multipart,calc,fit}
\usetikzlibrary{matrix}

\usepackage{lineno}


\usepackage{braket}
\usepackage{multirow}

\DeclareMathOperator*{\argmin}{arg\,min}

\DeclareMathOperator*{\trace}{tr}
\DeclareMathOperator{\rank}{rank}

\newcommand{\reals}{\mathbb{R}}

\newcommand{\fro}[1]{\|#1\|_{\mathrm{F}}}

\newcommand{\eg}{e.g.}
\newcommand{\ie}{i.e.}

\newcommand{\dop}{\mathrm{D}}
\newcommand{\ddt}{\frac{\mathrm{d}}{\mathrm{d}t}}
\newcommand{\po}[1][\Omega]{\mathcal{P}_{#1}}

\newcommand{\trs}[1][x]{{#1}^{\mathrm{T}}}

\newcommand{\grp}{\mathcal{G}}
\newcommand{\gre}{\mathcal{E}} 
\newcommand{\grv}{\mathcal{V}} 

\newcommand{\rkval}{r}
\DeclareMathOperator{\ad}{ad}

\newcommand{\abs}{\sigma_{\mathrm{abs}}}

\renewcommand{\po}[1][\Omega]{\mathcal{P}_{#1}}
 
\newcommand{\ssgn}[1][]{\mathrm{sign}_{#1}} 
\newcommand{\dom}[1][d]{\reals^{#1\times #1}} 
\newcommand{\tdom}[1][d]{\reals^{#1\times #1}} 
\newcommand{\prodsp}[1][\rkval]{\reals^{d\times #1}\times\reals^{d\times #1}} 
\newcommand{\tol}{\mathrm{tol}}  
\newcommand{\smat}{\mathcal{S}}
\newcommand{\sabs}{\mathcal{S}_{\mathrm{abs}}}
\newcommand{\sodot}{\mathcal{S}_2}

\DeclareMathOperator*{\suppo}{supp}
\newcommand{\exptr}[1][\cdot]{\trace(\exp(#1))}

\newcommand{\splr}{A_{\Omega}}
\newcommand{\setdag}{\mathcal{D}_{d\times d}}
\newcommand{\infn}[1][\cdot]{\|#1\|_{\max}}

\newcommand{\epsh}{\delta_{\epsilon}}
\newcommand{\ca}{C_{0}}
\newcommand{\dparam}{\epsilon^\star_{r,\Omega}}



\newcommand{\michele}[1]{}
 
\def\ourmo{{\sc LoRAM}}      
\def\ouralg{\ourmo-AGD} 
\def\notears{{\sc NoTears}}  


\title{From graphs to DAGs: a low-complexity model and a scalable algorithm}

\author{
Shuyu Dong\footnote{TAU, LISN, INRIA, Universit\'e Paris-Saclay, 91190 Gif-sur-Yvette, France. (\url{shuyu.dong@inria.fr})}
\and 
Mich{\`e}le Sebag\footnote{TAU, LISN, CNRS, INRIA, Universit\'e Paris-Saclay, 91190 Gif-sur-Yvette, France. (\url{michele.sebag@lri.fr})}
}
\date{April 10, 2022}

\begin{document}
\maketitle





\begin{abstract}
Learning directed acyclic graphs (DAGs) is long known a critical challenge at the core of probabilistic and causal modeling. 
The \notears\ approach of Zheng et al.~\cite{NEURIPS2018_e347c514}, 
through a differentiable function involving the matrix exponential trace
$\trace(\exp(\cdot))$, opens up a way to learning DAGs via continuous
optimization, though with an $O(d^3)$ complexity in the number $d$ of nodes. 
This paper presents a low-complexity model, called \ourmo\ for Low-Rank Additive
Model, which combines low-rank matrix factorization with a sparsification
mechanism for the continuous optimization of DAGs.
The main contribution of the approach lies in an efficient gradient
approximation method leveraging the low-rank property of the model, and its
straightforward application to the computation of projections from graph
matrices onto the DAG matrix space.
The proposed method achieves a reduction from a cubic complexity to quadratic
complexity while handling the same DAG characteristic function as \notears\,
and scales easily up to thousands of nodes for the projection problem. 
The experiments show that the \ourmo\ achieves efficiency gains of orders of
magnitude compared to the state-of-the-art at the expense of a very moderate
accuracy loss in the considered range of sparse matrices, and with a low
sensitivity to the rank choice of the model's low-rank component. 

\end{abstract}

%
%

\section{Introduction} 
\label{sec:intro}

The learning of directed acyclic graphs (DAGs) is an important problem for
probabilistic and causal inference~\cite{pearl2009causality,peters2017elements} with important applications 
in social sciences~\cite{morgan2015counterfactuals}, genome research~\cite{stephens2009bayesian} and machine learning itself~\cite{peters2016causal,arjovsky2019invariant,Sauer2021ICLR}. 
Through the development of probabilistic graphical models~\cite{pearl2009causality,buhlmann2014cam}, DAGs are a most natural
mathematical object to describe the causal relations among a number of variables. 
In today's many application domains, the estimation of DAGs faces intractability issues as an ever growing number $d$ of variables is considered, due to the fact that estimating DAGs is NP-hard~\cite{chickering1996learning}. %
The difficulty lies in how to enforce the acyclicity of graphs.
Shimizu et al.~\cite{shimizu2006linear} combined independent component analysis
with the combinatorial linear assignment method to optimize a linear causal model (LiNGAM) and later proposed a direct and sequential algorithm~\cite{shimizu2011directlingam} guaranteeing global optimum of the LiNGAM, for $O(d^4)$ complexities.

Recently, Zheng et al.~\cite{NEURIPS2018_e347c514} proposed an optimization approach to learning DAGs. %
The breakthrough in this work, called \notears, comes with the characterization of the DAG matrices by the zero set of a real-valued differentiable function on
$\reals^{d\times d}$, which shows that an $d\times d$ matrix $A$ is the adjacency matrix of a DAG if and only if the {\it exponential trace} satisfies 
\michele{Je pensais que c'était exp(A). Et que le $A \odot A$ avait été rajouté pour simplifier l'optim.}
\begin{align}\label{eq:zheng18-thm1}
h(A):=\trace(\exp(A\odot A))=d,
\end{align}
and thus the learning of DAG matrices can be cast as a continuous optimization problem subject to the constraint
$h(A) = d$. %
The \notears\ approach broadens the way of learning complex causal relations and provides promising perspectives 
to tackling large-scale inference problems~\cite{kalainathan2018structural,yu2019dag,zheng2020learning,ng2020role}. %
However, \notears\ is still not suitable for large-scale applications as the complexity of computing the exponential trace and its gradient is $O(d^3)$. %
More recently, Fang et al.~\cite{fang2020low} proposed to represent DAGs by low-rank matrices with both theoretical and empirical validation of the low-rank assumption for a range of graph models. However, the adaptation of the \notears\ framework~\cite{NEURIPS2018_e347c514} to low-rank model still yields a complexity of $O(d^3)$ due to the DAG characteristic function in~\eqref{eq:zheng18-thm1}.

The contribution of the paper is to propose a new computational framework to tackle the scalability issues faced by the low-rank modeling of DAGs. %
We notice that the Hadamard product $\odot$ in characteristic functions as in~\eqref{eq:zheng18-thm1} poses real obstacles to scaling up the optimization of \notears~\cite{NEURIPS2018_e347c514} and \notears-low-rank~\cite{fang2020low}. %
To address these difficulties, we present a low-complexity model, named {\em Low-Rank Additive Model} (\ourmo), which is a composition of low-rank matrix factorization with sparsification, and then propose a novel approximation method compatible with \ourmo\ to compute the gradients of the exponential trace in~\eqref{eq:zheng18-thm1}. 
Formally, the gradient approximation---consisting of matrix computation of the form $(A,C,B)\to (\exp(A)\odot C)B$, where $A,C\in\dom$ and $B$ is a thin low-rank matrix---is inspired from the numerical analysis of~\cite{al2011computing} for the matrix action of $\exp(A)$. 
We apply the new method to the computation of projections from graphs to DAGs through optimization with the differentiable DAG constraint. 

Empirical evidence is presented to identify the cost and the benefits of the approximation method combined with Nesterov's accelerated gradient descent~\cite{nesterov1983}, depending on the considered range of problem parameters (number of nodes, rank approximation, sparsity of the target graph). 

The main contributions are summarized as follows: 
\begin{itemize}
    \item The \ourmo\ model, combining a low-rank structure with a flexible sparsification mechanism, is proposed to represent DAG matrices, together with a DAG characteristic function generalizing the exponential trace of \notears~\cite{NEURIPS2018_e347c514}. 

    \item An efficient gradient approximation method, exploiting the low-rank and sparse nature of the \ourmo\ model, is proposed. Under the low-rank assumption ($r\leq
C\ll d$), the complexity of the proposed method is quadratic ($O(d^2)$) instead
of $O(d^3)$ as shown in \Cref{tab:compl}. Large efficiency gains, with insignificant loss of accuracy in some cases, are demonstrated experimentally in the considered range of application. 

\begin{table}[htbp]
\small
\centering 
\caption{Computational properties of \ourmo\ and algorithms in related work. } 
\label{tab:compl}
\begin{tabular}{l|ccc}
\hline\hline
~                                             & Search space$\quad$  & Memory req.$\quad$ & Cost for $\nabla h$\\ \hline
\notears~\cite{NEURIPS2018_e347c514}          & $\dom$   & $O(d^2)$    & $O(d^3)$                \\ 
\notears-low-rank~\cite{fang2020low} &$\prodsp$ & $O(d\rkval)$ & $O(d^3)$                \\ 
\ourmo\ (ours)     &$\prodsp$ & $O(d\rkval)$ &  $O(d^2 r)$\\ 
\hline\hline
\end{tabular}
\end{table}

\end{itemize}


\section{Notation and formal background} 
\label{sec:prelims}

A graph on $d$ nodes is defined and denoted as a pair $\grp=(\grv,\gre)$, where
$|\grv| = d$ and $\gre\subset \grv\times \grv$. By default, a directed graph is simply referred to as a graph. The adjacency matrix of a graph $\grp$, denoted as $\mathbb{A}(\grp)$, is defined as
the matrix such that $[\mathbb{A}(\grp)]_{ij} =1 $ if
$(i,j)\in\gre$ and $0$ otherwise. 
Let $A\in\dom$ be any {\it weighted} adjacency matrix of a graph $\grp$ on $d$
nodes, then by definition, the adjacency matrix $\mathbb{A}(\grp)$ indicates the
nonzeros of $A$; the adjacency matrix $\mathbb{A}(\grp)$ is also the
{\it support} of $A$, denoted as $\suppo(A)$, \ie, $[\suppo(A)]_{ij} = 1$ if $A_{ij} \neq 0$ and $0$ otherwise. The number of nonzeros of $A$ is denoted as $\|A\|_0$. %
The matrix $A$ is called a {\it DAG matrix} if $\suppo(A)$ is the adjacency
matrix of a directed acyclic graph (DAG). 
We define, by convention, the set of DAG matrices as follows: 
$\setdag = \{A\in\dom: \suppo(A) \text{~defines a DAG} \}$. 

We recall the following theorem that characterizes acyclic graphs using the
matrix {\it exponential trace}---$\trace(\exp(\cdot))$---where $\exp(\cdot)$
denotes the matrix exponential function. The matrix exponential will be
denoted as $e^{\cdot}$ and $\exp(\cdot)$ indifferently. The 
operator $\odot$ denotes the matrix Hadamard product that acts on two matrices of the same size by elementwise multiplications. %

\begin{theorem}[{\cite{NEURIPS2018_e347c514}}]\label{thm:zheng18-thm1}
    A matrix $A\in\dom$ is a DAG matrix if and only if 
    \[\trace(\exp(A\odot A)) =d.\] 
\end{theorem}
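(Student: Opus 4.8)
The plan is to reduce everything to the combinatorial interpretation of powers of the nonnegative matrix $B := A\odot A$. First I would observe that $B_{ij} = A_{ij}^2 \ge 0$ and that $B$ has exactly the same support as $A$, so $\suppo(B) = \suppo(A)$ and hence $A$ is a DAG matrix if and only if the directed graph with adjacency pattern $\suppo(B)$ is acyclic. This is the step where the Hadamard square earns its keep: replacing $A$ by $A\odot A$ produces a matrix with \emph{nonnegative} entries without changing the support, which is exactly what will rule out the sign cancellations that would otherwise break the argument below.

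Next I would expand the matrix exponential as its absolutely convergent series and use linearity of the trace,
\begin{equation*}
\trace(\exp(B)) = \sum_{k=0}^{\infty} \frac{\trace(B^k)}{k!},
\end{equation*}
and invoke the standard walk interpretation of matrix powers: $(B^k)_{ii}$ equals the sum, over all closed walks $i \to \cdots \to i$ of length $k$, of the product of the entries of $B$ traversed along the walk. Because every entry of $B$ is nonnegative, each diagonal entry of $B^k$ is nonnegative, hence $\trace(B^k) \ge 0$ for every $k$, while the base term is $\trace(B^0) = \trace(I) = d$.

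The heart of the argument is the equivalence between acyclicity and the vanishing of all positive-order traces. If the graph is acyclic it admits no closed walk of positive length, so $\trace(B^k) = 0$ for every $k \ge 1$. Conversely, a cycle of length $\ell$ yields a closed walk whose weight is a product of strictly positive $B$-entries (each edge of the cycle corresponds to some $A_{ij}\neq 0$, so $B_{ij} = A_{ij}^2 > 0$), contributing a strictly positive amount to $\trace(B^\ell)$; hence if all positive-order traces vanish the graph must be acyclic.

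Finally I would assemble the two directions. If $A$ is a DAG matrix, then $\trace(B^k)=0$ for all $k\ge 1$ and only the $k=0$ term survives, giving $\trace(\exp(B)) = d$. For the converse, supposing $\trace(\exp(B)) = d$ and subtracting the $k=0$ term leaves $\sum_{k\ge 1} \trace(B^k)/k! = 0$; since every summand is nonnegative, each must vanish, forcing $\trace(B^k)=0$ for all $k\ge 1$ and hence acyclicity. I do not anticipate a serious obstacle; the only point requiring care is the justification that no cancellation can occur, which is precisely why the nonnegativity of $B = A\odot A$, rather than of $A$ itself, is indispensable.
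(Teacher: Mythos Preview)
Your proposal is correct and follows essentially the same approach as the paper. The paper does not prove \Cref{thm:zheng18-thm1} directly (it is cited from \cite{NEURIPS2018_e347c514}), but its proof of the immediate generalization \Cref{thm:dag-hfunc} in \Cref{sec-app:proofs} proceeds exactly as you do: set $B=\sigma(A)$ (here $B=A\odot A$), note that $B\ge 0$ and $\suppo(B)=\suppo(A)$, expand $\exptr[B]$ as $d+\sum_{k\ge 1}\trace(B^k)/k!$, interpret $\trace(B^k)$ via closed walks, and use nonnegativity to rule out cancellation in both directions.
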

The following corollary is a straightforward extension of the theorem above: 
\begin{corollary}\label{thm:dag-hfunc}
Let $\sigma:\dom\mapsto\dom$ be an operator such that: (i) $\sigma(A)\geq 0$ and
(ii) $\suppo(A) = \suppo(\sigma(A))$, for any $A\in\dom$. Then, $A\in\dom$ is a DAG matrix if and only if $\trace(\exp(\sigma(A))) =d$. 
\end{corollary}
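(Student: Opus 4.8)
The plan is to reduce the corollary to \Cref{thm:zheng18-thm1} by isolating the single structural fact that drives the exponential-trace characterization: once a matrix is entrywise nonnegative, both the property of being a DAG matrix and the property $\trace(\exp(\cdot))=d$ depend only on the support pattern. Since $\sigma$ changes the entries of $A$ but, by assumption, neither their sign (it makes them nonnegative) nor their zero/nonzero pattern, replacing $A\odot A$ by $\sigma(A)$ cannot change the truth value of the constraint.

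First I would record the power-series identity $\trace(\exp(M)) = \sum_{k\ge 0}\frac{1}{k!}\trace(M^k)$, noting $\trace(M^0)=d$, and prove the core lemma: for any nonnegative $M\ge 0$, one has $\trace(\exp(M))=d$ if and only if $\suppo(M)$ defines a DAG. The argument expands $\trace(M^k)=\sum_{i_1,\dots,i_k} M_{i_1 i_2}\cdots M_{i_k i_1}$; because $M\ge 0$ every summand is nonnegative, so $\trace(\exp(M))\ge d$ with equality exactly when $\trace(M^k)=0$ for all $k\ge 1$. Each such trace sums products taken along closed walks of length $k$ in $\suppo(M)$, and a single term is strictly positive precisely when every edge of the corresponding closed walk is present in $\suppo(M)$. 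Hence $\trace(M^k)=0$ for all $k\ge 1$ iff $\suppo(M)$ admits no closed walk, \ie\ no directed cycle, \ie\ defines a DAG.

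With the lemma in hand the corollary follows in a few lines. By hypothesis (i), $\sigma(A)\ge 0$, so the lemma applies with $M=\sigma(A)$: $\trace(\exp(\sigma(A)))=d$ iff $\suppo(\sigma(A))$ defines a DAG. By hypothesis (ii), $\suppo(\sigma(A))=\suppo(A)$, and by definition $A$ is a DAG matrix iff $\suppo(A)$ defines a DAG; chaining these equivalences gives the claim. As a consistency check, $\sigma(A)=A\odot A$ satisfies (i) and (ii)—$(A\odot A)_{ij}=A_{ij}^2\ge 0$ vanishes iff $A_{ij}=0$—so \Cref{thm:zheng18-thm1} is recovered as the special case; equivalently, one may phrase the whole argument as observing that $\sigma(A)$ and $A\odot A$ share the same support, the lemma shows the truth value of ``$\trace(\exp(\cdot))=d$'' depends only on that support, and therefore the two conditions are equivalent and \Cref{thm:zheng18-thm1} transfers verbatim.

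The one delicate point I anticipate is making the closed-walk bookkeeping rigorous, namely that $\trace(M^k)=0$ for all $k\ge 1$ is genuinely equivalent to acyclicity: a cycle of length $\ell$ must force $\trace(M^\ell)>0$ through the strictly positive product around it, while a DAG must annihilate every closed walk of every length. The subtlety is that nonnegativity is exactly what prevents cancellation between distinct walk contributions—without $M\ge 0$ the sum $\trace(M^k)$ could vanish even in the presence of cycles—so hypothesis (i) is used essentially here and not merely cosmetically. Everything else is routine, and no spectral or analytic input beyond the convergent series for $\exp$ is required.
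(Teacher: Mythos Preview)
Your proposal is correct and follows essentially the same argument as the paper's proof: expand $\trace(\exp(\sigma(A)))$ as a power series, use nonnegativity from (i) to ensure each $\trace(\sigma(A)^k)\ge 0$ so that the sum vanishes iff every term does, interpret $\trace(\sigma(A)^k)$ as a sum of nonnegative weights over closed walks in $\suppo(\sigma(A))$, and invoke (ii) to identify that support with $\suppo(A)$. The only cosmetic difference is that you package the combinatorial step as a standalone lemma about arbitrary nonnegative $M$, whereas the paper argues directly with $B=\sigma(A)$.
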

In view of the property above, we will refer to the composition of $\exptr$ and
the operator $\sigma$ as a {\it DAG characteristic function}. %
Next, we show some more properties (proof in \Cref{sec-app:proofs}) of the exponential trace. 
\begin{proposition}\label{prop:trexpm-divers}
    The exponential trace $\tilde{h}:\reals^{d\times d} \mapsto \reals: A\to\exptr[A]$ satisfies: 
    (i) For all $\bar{A}\in\reals^{d\times d}_+$, $\exptr[\bar{A}] \geq d$ and
    $\exptr[\bar{A}]=d$ if and only if $\bar{A}$ is a DAG matrix. 
    (ii) $\tilde{h}$ is nonconvex on $\reals^{d\times d}$. 
    (iii) The Fr\'echet
derivative of $\tilde{h}$ at $A\in\dom$ along any direction
$\xi\in\reals^{d\times d}$ is \[\dop \tilde{h}(A)[\xi] = \trace(\exp(A)\xi),\] and the
gradient of $\tilde{h}$ at $A$ %
is $\nabla \tilde{h}(A) = \trs[(\exp(A))]$. 
\end{proposition}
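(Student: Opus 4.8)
My plan is to treat the three claims separately, exploiting in each case the power-series representation $\exp(A)=\sum_{k\ge0}A^k/k!$ together with the cyclicity of the trace. For (i), I would write $\exptr[\bar{A}]=d+\sum_{k\ge1}\trace(\bar{A}^k)/k!$, using $\trace(\bar{A}^0)=\trace(I)=d$. Since $\bar{A}\ge0$ entrywise, every power $\bar{A}^k$ is entrywise nonnegative, so each $\trace(\bar{A}^k)\ge0$ and hence $\exptr[\bar{A}]\ge d$ at once. For the equality clause I would note that $\exptr[\bar{A}]=d$ forces $\trace(\bar{A}^k)=0$, i.e.\ $[\bar{A}^k]_{ii}=0$ for every $i$ and every $k\ge1$; reading $[\bar{A}^k]_{ii}$ as the total weight of length-$k$ closed walks at node $i$, this simultaneous vanishing is equivalent to $\suppo(\bar{A})$ having no directed cycle, that is, to $\bar{A}$ being a DAG matrix. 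This both recovers and sharpens \Cref{thm:zheng18-thm1} on the nonnegative cone.

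For (ii), rather than inspecting a Hessian I would exhibit one affine line along which the restriction of $\tilde{h}$ fails to be convex. Let $N\in\dom$ be the skew-symmetric matrix with $N_{12}=1$, $N_{21}=-1$, and all other entries zero. Then $N^2=-I$ on the leading $2\times2$ block and $0$ elsewhere, so $\exp(tN)$ acts as a planar rotation by angle $t$ in that block and as the identity elsewhere; hence $\tilde{h}(tN)=2\cos t+(d-2)$. Its second derivative $-2\cos t$ changes sign, so $t\mapsto\tilde{h}(tN)$ is not convex, and since a convex function on $\dom$ restricts to a convex function on every line, $\tilde{h}$ is nonconvex.

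For (iii) I would differentiate the series term by term. Cyclicity gives $\dop\,\trace(A^k)[\xi]=\sum_{j=0}^{k-1}\trace(A^j\xi A^{k-1-j})=k\,\trace(A^{k-1}\xi)$, so summing yields $\dop\tilde{h}(A)[\xi]=\sum_{k\ge1}\frac{1}{(k-1)!}\trace(A^{k-1}\xi)=\trace(\exp(A)\xi)$; equivalently one may invoke the integral formula $\dop\exp(A)[\xi]=\int_0^1 e^{sA}\xi e^{(1-s)A}\,\mathrm{d}s$ and collapse it under the trace via $e^{(1-s)A}e^{sA}=e^A$. Identifying the gradient through the Frobenius inner product $\langle G,\xi\rangle=\trace(\trs[G]\xi)$, the relation $\dop\tilde{h}(A)[\xi]=\langle\nabla\tilde{h}(A),\xi\rangle$ holding for all $\xi$ forces $\nabla\tilde{h}(A)=\trs[(\exp(A))]$.

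The routine parts are the inequality in (i) and the differentiation in (iii), which follow directly from nonnegativity and from cyclicity together with the analyticity that legitimizes term-by-term differentiation. I expect the main obstacle to be the equality clause of (i): one must argue rigorously that the joint vanishing of all $\trace(\bar{A}^k)$ is equivalent to acyclicity of $\suppo(\bar{A})$, which rests on the identity between $[\bar{A}^k]_{ii}$ and weighted closed walks and on the fact that a closed walk of positive length exists precisely when a directed cycle does. Some care is also needed with self-loops, since a single nonzero diagonal entry already yields a length-one cycle and a strictly positive $\trace(\bar{A})$.
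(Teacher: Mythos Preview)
Your proposal is correct. Part (i) matches the paper's argument essentially verbatim. Parts (ii) and (iii), however, proceed along genuinely different lines.

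For (ii), the paper takes the two $2\times2$ DAG matrices $A=\begin{pmatrix}0&1\\0&0\end{pmatrix}$ and $B=\begin{pmatrix}0&0\\1&0\end{pmatrix}$ and observes that any strict convex combination $M_\alpha=\alpha A+(1-\alpha)B$ is nonnegative and contains a $2$-cycle, so part (i) immediately gives $\tilde{h}(M_\alpha)>d=\alpha\tilde{h}(A)+(1-\alpha)\tilde{h}(B)$. Your rotation argument via $N$ with $\tilde{h}(tN)=2\cos t+(d-2)$ is equally valid and self-contained; it does not appeal to (i), but requires an explicit spectral computation. The paper's route has the aesthetic advantage of tying nonconvexity directly to the DAG characterization, whereas yours is a clean closed-form one-liner.

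For (iii), the paper invokes an external formula $\ddt e^{A+t\xi}\big|_{t=0}=e^A\tilde f(\ad_A)(\xi)$ with $\tilde f(z)=(1-e^{-z})/z$, and then shows that all commutator terms $\trace\big(e^A(\ad_A)^k(\xi)\big)$ with $k\ge1$ vanish because $A$ and $e^A$ commute. Your term-by-term differentiation of the power series, using cyclicity to collapse $\sum_j\trace(A^j\xi A^{k-1-j})$ into $k\,\trace(A^{k-1}\xi)$, is more elementary and avoids the cited lemma altogether; it is arguably the cleaner route here since the trace kills precisely the noncommutativity that makes $\dop\exp$ itself complicated.
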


\section{LoRAM: a low-complexity model}
\label{ssec:splr-notears}

In this section, we describe a low-complexity matrix representation of the
adjacency matrices of directed graphs, and then a generalized DAG characteristic
function for the new matrix model. %

In the spirit of searching for best low-rank singular value decompositions and taking inspiration from~\cite{fang2020low}, the search of a full $d \times d$ (DAG) matrix $A$ is replaced by the search of a pair of thin factor matrices $(X,Y)$, in $\reals^{d\times \rkval}\times \reals^{d\times \rkval}$ for $1\leq\rkval < d$, and the $d\times d$ candidate graph matrix is represented by the product $X\trs[Y]$. This matrix product has a rank bounded by $\rkval$, with number of parameters $2dr\leq d^2$. 
However, the low-rank representation $(X,Y)\to X\trs[Y]$ generally gives a
dense $d\times d$ matrix. %
Since in many scenarios the sought graph (or Bayesian network) is usually sparse, we apply a {\it sparsification} operator on $X\trs[Y]$ in order to trim abundant entries in $X\trs[Y]$. Accordingly, we combine the two operations and introduce the following model. 

\begin{definition}[LoRAM]\label{def:splr-rep}
Let $\Omega\subset [d]\times [d]$ be a given index set. 
The {\em low-rank additive model} (LoRAM), noted $\splr$, is defined from 
the matrix product of $(X,Y)\in\prodsp$ sparsified according to $\Omega$: 
\begin{equation}
    \label{eq:def-splr-rep}
    \splr(X,Y) = \po(X\trs[Y]), 
\end{equation}
where $\po:\dom\mapsto\dom$ is a mask operator such that 
$[\po(A)]_{ij} = A_{ij}$ if $(i,j)\in\Omega$ and $0$ otherwise. The set $\Omega$ is referred to as the {\em candidate set} of \ourmo.
\end{definition}

The candidate set $\Omega$ is to be fixed according to the specific problem. In the case of projection from a given graph to the set of DAGs, $\Omega$ can be fixed as the index set of the given graph's edges.

The DAG characteristic function on the \ourmo\ search space is defined as follows:  

\begin{definition}\label{def:splr-hfunc}
    Let $\tilde{h}:\dom\mapsto\reals:A\to\trace(\exp(A))$ denote the
   exponential trace function. We define $h:\prodsp\mapsto\reals$ by  
    \begin{equation} \label{eq:def-h-ours}
        h(X,Y) = \trace(\exp (\sigma(\splr(X,Y)))), 
    \end{equation}
    where %
    $\sigma:\dom\to\dom$ is one of the following elementwise operators: 
\begin{align}
    & \sigma_2(Z):=Z\odot Z \quad \text{and}\quad %
\abs(Z):= \sum_{i,j=1}^d |Z_{ij}| e_i \trs[e_j].  \label{eq:def-po-abs}  
\end{align}
\end{definition}

Note that operators $\sigma_2$ and $\abs$~\eqref{eq:def-po-abs} are two natural choices that meet the conditions (i)--(ii) of \Cref{thm:dag-hfunc}, since they both produce a nonnegative surrogate matrix of the $d\times d$ matrix $\splr$ while preserving the support of $\splr$. %

\subsection{Representativity} %
In the construction of a \ourmo\ matrix~\eqref{eq:def-splr-rep}, the low-rank component of the model---$X\trs[Y]$ with $(X,Y)\in\prodsp$---has a rank smaller or equal to $r$ (equality attained when $X$ and $Y$ have full column ranks), and the subsequent sparsification operator $\po$ generally induces a change in the rank of the final matrix model $\po(X\trs[Y])$~\eqref{eq:def-splr-rep}. %
Indeed, the rank of $\splr=\po(X\trs[Y])$ depends on an interplay between $(X,Y)\in\prodsp$ and the discrete set 
$\Omega\in[d]\times [d]$. %
The following examples illustrate the two extreme cases of such interplay: 
\begin{itemize}
    \item[(i)] 
The first extreme case: let $\Omega$ be the index set of the edges of a sparse
graph $\grp_{\Omega}$, and let $(X,Y)\in\reals^{d\times 1}\times
\reals^{d\times 1}$ be the pair of matrices containing all ones, for $r=1$, then
the \ourmo\ matrix $\po(X\trs[Y])=\mathbb{A}(\grp_{\Omega})$, \ie, the
adjacency matrix of $\grp_{\Omega}$. Hence
$\rank(\po(X\trs[Y]))=\rank(\mathbb{A}_{\Omega})$, 
which depends solely on $\Omega$ and is generally much larger than $r=1$. %
\item[(ii)] The second extreme case: let $\Omega$ be the full $[d] \times [d]$
    index set, then $\po$ reduces to the identity map such that $\po(X\trs[Y])=X\trs[Y]$ and $\rank(\po(X\trs[Y]))=\rank(X\trs[Y])\leq r$ for any $(X,Y)\in\prodsp$. 
\end{itemize}

In the first extreme case above, optimizing \ourmo~\eqref{eq:def-splr-rep} for DAG learning boils down to choosing the most relevant edge set $\Omega$, which is an NP-hard combinatorial problem~\cite{chickering1996learning}.
In the second extreme case, the optimization of \ourmo~\eqref{eq:def-splr-rep} reduces to learning the most pertinent low-rank matrices $(X,Y)\in\prodsp$, which coincides with optimizing the \notears-low-rank~\cite{fang2020low} model. 

In this work, we are interested in settings between the two extreme cases above such that both $(X,Y)\in\prodsp$ and the candidate set $\Omega$ have sufficient degrees of freedom. Consequently, the representativity of \ourmo\ depends on both the rank parameter $r$ and $\Omega$. 

Next, we present a way of quantifying the representativity of \ourmo\ with respect to a subset $\setdag^\star$ of DAG matrices. %
The restriction to a subset $\setdag^\star$ is motivated by the revelation that certain types of DAG
matrices---such as those with many hubs---can be represented by low-rank
matrices~\cite{fang2020low}. 
\begin{definition}\label{assp}
Let $\setdag^\star\subset\setdag$ be a given set of nonzero DAG matrices. For 
$Z_0\in\setdag^\star$, let $\splr^*(Z_0)$ denote any \ourmo\ matrix~\eqref{eq:def-splr-rep} such that 
$
\|\splr^*(Z_0)-Z_0\|=\min_{(X,Y)\in\prodsp} \|\splr(X,Y)-Z_0\|$, %
then we define the relative error of \ourmo\ w.r.t. $\setdag^\star$ as
\[
\dparam=\max_{Z\in\setdag^\star}\big\{\frac{\infn[\splr^*(Z) - Z]}{\infn[Z]}\big\},\] %
where $\infn[Z]:=\max_{ij}|Z_{ij}|$ denotes the matrix max-norm. 
For $Z_0\in\setdag^\star$, $\splr^*(Z_0)$ is referred to as an $\dparam$-quasi DAG matrix. 
\end{definition}
Note that the existence of $\splr^*(Z_0)$ for any $Z_0\in\setdag^\star$ is guaranteed by the closeness of the image set of \ourmo~\eqref{eq:def-splr-rep}.

Based on the relative error above, the relevance of the DAG characteristic function is established from the following proposition (proof in \Cref{sec-app:repres}):
\begin{proposition}\label{prop:wellp}
    Given a set $\setdag^\star\subset\setdag$ of nonzero DAG matrices. For any
    $Z_0\in\setdag^\star$ such that $\infn[Z_0]\leq 1$, without loss of generality, 
    the minima of \[\min_{(X,Y)\in\prodsp} \|\splr(X,Y) - Z_0\|\] belong to the set 
    \begin{align}
    \label{eq:set-wellp}
    \{(X,Y)\in\prodsp: h(X,Y)  - d \leq \ca\dparam\}
    \end{align}
where 
$\dparam$ is given in \Cref{assp} 
and
$\ca = \big(C_1\|Z_0\|_0+\sum_{ij}[e^{\sigma(Z_0)}]_{ij}\big)\infn[Z_0]$ for a constant $C_1 \geq 0$. 
\end{proposition}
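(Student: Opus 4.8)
The plan is to evaluate the claimed quantity at a minimizer and reduce it to a perturbation estimate for the exponential trace around a DAG matrix. Let $(X^*,Y^*)$ be any minimizer of $\min_{(X,Y)\in\prodsp}\|\splr(X,Y)-Z_0\|$, so that $\splr(X^*,Y^*)=\splr^*(Z_0)$ in the notation of \Cref{assp}. Write $S:=\sigma(Z_0)$, $S':=\sigma(\splr^*(Z_0))$, and $E:=S'-S$, where $S,S'\ge 0$ entrywise. Because $Z_0$ is a DAG matrix and $\sigma$ preserves its support while rendering it nonnegative, \Cref{thm:dag-hfunc} gives $\tilde h(S)=\trace(\exp(S))=d$; moreover acyclicity forces $\trace(S^k)=0$ for every $k\ge 1$, since $\trace(S^k)$ sums the weights of closed walks of length $k$ and a DAG admits none. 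Hence $h(X^*,Y^*)-d=\tilde h(S')-\tilde h(S)=\sum_{k\ge 1}\frac{1}{k!}\trace\big((S+E)^k\big)$, in which the all-$S$ word of each power drops out.

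First I would control the size of the perturbation. By \Cref{assp}, $\infn[\splr^*(Z_0)-Z_0]\le\dparam\,\infn[Z_0]$. Since $\abs$ is $1$-Lipschitz entrywise and $\sigma_2$ satisfies $|\sigma_2(a)-\sigma_2(b)|=|a-b|\,|a+b|$, the normalization $\infn[Z_0]\le 1$ yields $\infn[E]\le c\,\dparam\,\infn[Z_0]$ with $c=1$ for $\abs$ and an absolute constant $c$ for $\sigma_2$. Next I would expand the trace difference to first order in $E$: collecting in each power the words containing exactly one factor $E$ and using cyclicity of the trace gives the linear term $\sum_{k\ge 1}\frac{1}{(k-1)!}\trace(E\,S^{k-1})=\trace(E\exp(S))$. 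Since $S\ge 0$ entrywise we have $\exp(S)\ge 0$ entrywise, so $|\trace(E\exp(S))|\le\infn[E]\sum_{i,j}[\exp(S)]_{ij}$, which is precisely the $\sum_{ij}[e^{\sigma(Z_0)}]_{ij}\,\infn[Z_0]\,\dparam$ contribution to $\ca\dparam$ (the $\abs$ case matches with coefficient one; the $\sigma_2$ constant will be absorbed below).

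It remains to bound the higher-order remainder $\sum_{k\ge 2}\frac{1}{k!}\sum_{\text{words with}\ge 2\,E}\trace(\cdot)$, equivalently the excess of the mean-value form $\int_0^1\trace(\exp(S_t)\,E)\,dt$ over its value at $t=0$, where $S_t:=(1-t)S+tS'\ge 0$. The key structural fact is that every closed walk contributing to $\trace((S')^k)$ must traverse at least one edge outside $\suppo(Z_0)$ (otherwise the DAG would contain a cycle), and on such edges $E$ coincides with $S'$ and is itself of order $\infn[E]$; this is where the edge count $\|Z_0\|_0$ enters, through an accounting of the admissible walks together with the entrywise monotonicity of the matrix exponential ($0\le A\le B\Rightarrow\exp(A)\le\exp(B)$ entrywise), which lets one dominate $\exp(S_t)$ by $\exp(S+|E|)$. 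This step is the main obstacle: the remainder is formally $O(\dparam^2)$, so packaging it as the clean linear term $C_1\|Z_0\|_0\,\infn[Z_0]\,\dparam$ requires a careful, sparsity-aware estimate of the combinatorial factors (and absorbing the $\sigma_2$ constant into $C_1$), rather than a one-line bound. Combining the first-order estimate with this remainder bound gives $h(X^*,Y^*)-d\le\ca\,\dparam$ with $\ca=(C_1\|Z_0\|_0+\sum_{ij}[e^{\sigma(Z_0)}]_{ij})\infn[Z_0]$, as claimed.
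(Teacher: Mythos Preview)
Your approach coincides with the paper's: expand $\tilde h$ at $\sigma(Z_0)$, use \Cref{thm:dag-hfunc} to kill the constant term, and bound the linear contribution $\trace(E\,e^{\sigma(Z_0)})$ by $\infn[E]\sum_{ij}[e^{\sigma(Z_0)}]_{ij}$. Your identification $E=\sigma(\splr^*)-\sigma(Z_0)$ is in fact cleaner than the paper's informal shorthand $\sigma(\xi)$ for the same object.

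Where you diverge is the remainder, and there you are working much harder than needed. The paper simply invokes a second-order Taylor bound $C_1\fro{E}^2$ with an \emph{unspecified} constant $C_1\ge 0$, then applies two elementary reductions: since the perturbation is supported on at most $\|Z_0\|_0$ entries, $\fro{E}^2\le\|Z_0\|_0\,\infn[E]^2$; and since $\infn[E]\le c\,\dparam\,\infn[Z_0]$ together with the normalizations $\dparam\le 1$ and $\infn[Z_0]\le 1$, one has $\infn[E]^2\le c^2\,\dparam\,\infn[Z_0]$. That is the entire mechanism behind the $C_1\|Z_0\|_0$ term. The ``main obstacle'' you flag---converting an $O(\dparam^2)$ remainder into $O(\dparam)$---dissolves via the trivial inequality $\dparam^2\le\dparam$, and $C_1$ is never meant to be made explicit or to carry combinatorial meaning. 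Your closed-walk accounting and entrywise monotonicity of $\exp$ are not wrong in spirit, but they are unnecessary and would be harder to close rigorously than the two-line Frobenius estimate the paper actually uses.
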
 
\begin{remark} 
    The constant $\ca$ in \Cref{prop:wellp} can be seen as a
    measure of total capacity of passing from one node to other nodes, and therefore
    depends on $d$, $\infn[Z_0]$ (bounded by $1$), 
    the sparsity and the average degree of the graph of $Z_0$. For
    DAG matrices with sparsity $\rho_0\sim 10^{-3}$ and $d\lesssim 10^3$, one can expect that $\ca \leq Cd$ for a constant
    $C$.  $\hfill\square$ 
\end{remark}

The result of~\Cref{prop:wellp} establishes that, under the said conditions, a given DAG matrix $Z_0$ admits low rank approximations $\splr(X,Y)$ satisfying  $h(X,Y)-d \leq \epsh$ for a small enough parameter $\epsh$. In other words, the low-rank projection with a relaxed DAG constraint admits solutions.

The general case of projecting a non-acyclic graph matrix $Z_0$ onto a low-rank matrix under a relaxed DAG constraint is considered in next section.

\section{Scalable projection from graphs to DAGs}
\label{sec:projdag-algs}
Given a (generally non-acyclic) graph matrix $Z_0\in\dom$, let us consider the projection of $Z_0$ onto the feasible set~\eqref{eq:set-wellp}:
\begin{align}
    \label{prog:main-generic}
    & \min_{(X,Y)\in\reals^{d\times \rkval}\times\reals^{d\times \rkval}}
    \frac{1}{2}\fro{\splr(X,Y)-Z_0}^2  \quad\text{subject to~}  h(X,Y) - d \leq
    \epsh, %
\end{align}
where $\splr(X,Y)$ is the \ourmo\ matrix~\eqref{eq:def-splr-rep}, $h$ is given by~\Cref{def:splr-hfunc}, and $\epsh > 0$ is a tolerance parameter. 
Based on~\Cref{prop:wellp} and given the objective function
of~\eqref{prog:main-generic}, the solution to~\eqref{prog:main-generic}
provides a quasi DAG matrix closest to $Z_0$ and thus enables finding a projection of $Z_0$ onto the DAG matrix space $\setdag$. 
More precisely, we tackle problem~\eqref{prog:main-generic} using the penalty method and focus on the primal problem, for a given penalty parameter $\lambda>0$, followed by elementwise hard thresholding: 
\begin{align}
& (X^*,Y^*)  =\argmin_{(X,Y)\in\prodsp} h(X,Y) + \frac{1}{2\lambda}\fro{\splr(X,Y)- Z_0}^2, \label{prog:projdag}\\ 
& A^*  = \mathbb{T}_{\dparam}(\splr(X^*,Y^*)), \label{prog:hthres} 
\end{align}
where $\mathbb{T}_{\dparam}(z) = z\delta_{|z|\geq \dparam}$ is the elementwise hard thresholding operator. The choice of $\lambda$ and $\dparam$ is discussed in \Cref{ssec-app:choice-lambda} in the context of problem~\eqref{prog:main-generic}. 
\begin{remark} 
    \label{rmk:omega-projdag}
To obtain any DAG matrix closest to (the non-acyclic) $Z_0$, it is necessary to break the cycles in $\grp(Z_0)$ by suppressing certain edges. Hence, we assume that the edge set of the sought DAG is a strict subset of $\suppo(Z_0)$ 
and thus fix the candidate set $\Omega$ to be $\suppo(Z_0)$. $\hfill\square$ 
\end{remark}

Problems~\eqref{prog:main-generic} and~\eqref{prog:projdag} are nonconvex due to: (i) the matrix exponential trace $A\to\exptr[A]$ in $h$~\eqref{eq:def-h-ours} is nonconvex (as in \cite{NEURIPS2018_e347c514}), and (ii) the matrix product $(X,Y)\to X\trs[Y]$ in $\splr(X,Y)$~\eqref{eq:def-splr-rep} is nonconvex. %
In view of the DAG characteristic constraint of~\eqref{prog:main-generic}, 
the augmented Lagrangian algorithms of \notears~\cite{NEURIPS2018_e347c514} and
\notears-low-rank~\cite{fang2020low} can be applied for the same objective as problem~\eqref{prog:main-generic}; %
it suffices for \notears\ and \notears-low-rank to replace the LoRAM matrix $\splr(X,Y)$ in~\eqref{prog:main-generic} by the $d\times d$ matrix variable and the dense matrix product $X\trs[Y]$ respectively. 

However, the \notears-based methods of~\cite{NEURIPS2018_e347c514,fang2020low} have an $O(d^3)$ complexity due to the composition of the elementwise operations (the Hadamard product $\odot$) with the matrix exponential in the $h$ function~\eqref{eq:zheng18-thm1} and~\eqref{eq:def-h-ours}. 
We elaborate this argument in the next subsection and then propose a new computational method for computations involving the gradient of the DAG characteristic function $h$.

\subsection{Gradient of the DAG characteristic function and an efficient approximation}

\begin{lemma}\label{lemm:diffcalc-sigma}
For any $Z\in\dom$ and $\xi\in\tdom$, 
the differentials of
$\sigma_2$ and $\abs$~\eqref{eq:def-po-abs} are 
\begin{equation}\tag{\ref{eq:def-po-abs}b}
    \dop\sigma_2(Z)[\xi] = 2Z\odot\xi \quad\text{and}\quad 
    \hat{\dop}\abs(Z)[\xi] = \ssgn(Z)\odot \xi, 
\end{equation} 
where $\ssgn(\cdot)$ is the element-wise sign function such that $[\ssgn(Z)]_{ij} =
\frac{Z_{ij}}{|Z_{ij}|}$ if $Z_{ij}\neq 0$ and $0$ otherwise. %
\end{lemma}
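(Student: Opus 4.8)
The plan is to exploit that both $\sigma_2$ and $\abs$ act \emph{entrywise} on $Z$, so that each output coordinate $[\sigma(Z)]_{ij}$ depends only on the single input coordinate $Z_{ij}$. Consequently the differential decouples across the $d^2$ coordinates, and the whole computation reduces to ordinary scalar calculus applied to the two generating functions $t\mapsto t^2$ and $t\mapsto|t|$. Concretely, for a fixed pair $(i,j)$ I would consider the scalar curve $t\mapsto[\sigma(Z+t\xi)]_{ij}$ and differentiate it at $t=0$, then reassemble the $d^2$ resulting scalars into a $d\times d$ matrix.

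For $\sigma_2$ this is immediate. Since $[\sigma_2(Z)]_{ij}=Z_{ij}^2$ is a polynomial, the map is smooth (hence Fr\'echet differentiable) on all of $\dom$, and
\begin{equation}
\frac{\mathrm{d}}{\mathrm{d}t}\Big|_{t=0}(Z_{ij}+t\xi_{ij})^2 = 2Z_{ij}\xi_{ij}.
\end{equation}
Reassembling the entries gives $\dop\sigma_2(Z)[\xi]=2Z\odot\xi$, the first claimed formula. This part is routine.

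The substance lies in the $\abs$ case, because $[\abs(Z)]_{ij}=|Z_{ij}|$ and the scalar map $t\mapsto|t|$ fails to be differentiable precisely at $t=0$. On the open subset of $\dom$ where the relevant entry is nonzero, $\abs$ is genuinely Fr\'echet differentiable, and differentiating $t\mapsto|Z_{ij}+t\xi_{ij}|$ at $t=0$ yields $\ssgn(Z_{ij})\,\xi_{ij}$ for each $(i,j)$ with $Z_{ij}\neq0$; stacking these reproduces $\ssgn(Z)\odot\xi$ on that subset. The point requiring care is the zero entries: there $|\cdot|$ has no ordinary derivative, so $\hat{\dop}\abs(Z)[\xi]$ is not a Fr\'echet derivative but a \emph{generalized} differential, in which one selects the element $0$ from the subdifferential $\partial|\cdot|(0)=[-1,1]$. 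This is exactly the convention $\ssgn(0)=0$ already built into the definition of $\ssgn$ in the statement, and with it the formula $\hat{\dop}\abs(Z)[\xi]=\ssgn(Z)\odot\xi$ extends to all of $\dom$.

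I expect the only real obstacle to be the bookkeeping around this non-smoothness: one must state clearly that $\hat{\dop}$ denotes the generalized (rather than Fr\'echet) differential and that the chosen subgradient at zero entries is $0$, so that both results can be written uniformly as elementwise products. Everything else is the coordinatewise chain rule applied to $t^2$ and $|t|$, and no matrix-level argument (in particular nothing involving the exponential) is needed at this stage.
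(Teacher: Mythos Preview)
Your proposal is correct and matches the paper's argument in substance: the paper also computes $\dop\sigma_2$ by differentiating $(Z+t\xi)\odot(Z+t\xi)$ at $t=0$ (invoking commutativity of $\odot$ rather than writing out coordinates, but this is the same product-rule computation), and for $\abs$ it likewise notes that $\hat{\dop}\abs$ is a subdifferential because the scalar sign function is a subgradient of $z\mapsto|z|$ and $\abs$ acts entrywise. Your treatment of the zero-entry case is more explicit than the paper's, but the approach is the same.
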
 

\begin{theorem} %
\label{lemm:diffcalc-lrnotears-hgen} 
The gradient of $h$~\eqref{eq:def-h-ours} is 
\begin{align}\label{eq:gradh-ours}
    \nabla h(X,Y) = (\smat Y, \trs[\smat]X)\in\prodsp, 
\end{align}
where $\smat\in\dom$ has the following expressions, depending on the choice of
$\sigma$ in~\eqref{eq:def-po-abs}: with $\splr:=\splr(X,Y)$ for brevity, 
\begin{align} 
    & \sodot = 2 (\trs[{\exp(\sigma_2(\splr))}]) \odot \splr,\label{eq:s-grad-h-odot} \\
    & \sabs= (\trs[{\exp(\abs(\splr))}]) \odot \ssgn(\splr). \label{eq:s-grad-h-abs} 
\end{align}

\end{theorem}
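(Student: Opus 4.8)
The plan is to obtain $\nabla h$ by the chain rule, reading the composition
\[
(X,Y)\ \longmapsto\ X\trs[Y]\ \xrightarrow{\po}\ \splr\ \xrightarrow{\sigma}\ \sigma(\splr)\ \xrightarrow{\tilde h}\ \trace(\exp(\sigma(\splr)))
\]
from the inside out, and then repackaging the resulting linear form $\dop h(X,Y)[\xi_X,\xi_Y]$ as a Frobenius inner product $\langle \nabla h(X,Y),(\xi_X,\xi_Y)\rangle$ so as to read off the gradient. Throughout I write $\langle A,B\rangle := \trace(\trs[A]B)$ for the Frobenius inner product on $\dom$. The four building blocks are already available: the derivative of $\tilde h$ from \Cref{prop:trexpm-divers}(iii), the differentials of $\sigma_2$ and $\abs$ from \Cref{lemm:diffcalc-sigma}, the linearity of the mask $\po$ (so $\dop\po=\po$), and the bilinearity of the matrix product, whose differential along $(\xi_X,\xi_Y)$ is $\xi_X\trs[Y]+X\trs[\xi_Y]$.

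First I would handle the outer pair $\tilde h\circ\sigma$ at the point $\splr$. Combining \Cref{prop:trexpm-divers}(iii) (with $A=\sigma(\splr)$) and \Cref{lemm:diffcalc-sigma} through the chain rule, the directional derivative along $\zeta\in\dom$ is $\trace\!\big(\exp(\sigma(\splr))\,\dop\sigma(\splr)[\zeta]\big)$. Two elementary identities then do the work: the transpose identity $\trace(NW)=\langle \trs[N],W\rangle$ and the Hadamard exchange identity $\langle M,P\odot\zeta\rangle=\langle M\odot P,\zeta\rangle$. With $\sigma=\sigma_2$ we have $\dop\sigma_2(\splr)[\zeta]=2\splr\odot\zeta$, hence $\dop(\tilde h\circ\sigma_2)(\splr)[\zeta]=\big\langle 2\,\trs[\exp(\sigma_2(\splr))]\odot\splr,\ \zeta\big\rangle$, and the bracket is exactly $\sodot$ of \eqref{eq:s-grad-h-odot}; the computation for $\abs$ is identical with $\ssgn(\splr)$ in place of $2\splr$, yielding $\sabs$ of \eqref{eq:s-grad-h-abs}. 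This is where the transpose on $\exp(\cdot)$ is produced.

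Next I would push $\smat$ back through $\po$ and the product. Since $\po$ is the orthogonal projector onto matrices supported on $\Omega$, it is self-adjoint, so $\langle \smat,\po(\omega)\rangle=\langle \po(\smat),\omega\rangle$; moreover $\smat$ is itself supported on $\Omega$, because $\splr$ and $\ssgn(\splr)$ both vanish off $\Omega$ and $\smat$ is a Hadamard product with one of them, whence $\po(\smat)=\smat$. Chaining with the product differential gives $\dop h(X,Y)[\xi_X,\xi_Y]=\big\langle \smat,\ \xi_X\trs[Y]+X\trs[\xi_Y]\big\rangle$. Splitting the two terms and applying the Frobenius-adjoint manipulations $\langle A,B\trs[C]\rangle=\langle AC,B\rangle$, $\langle A,BC\rangle=\langle \trs[B]A,C\rangle$, and $\langle P,\trs[Q]\rangle=\langle \trs[P],Q\rangle$ isolates $\xi_X$ and $\xi_Y$: the first term becomes $\langle \smat Y,\xi_X\rangle$ and the second $\langle \trs[\smat]X,\xi_Y\rangle$. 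Reading off the two blocks yields $\nabla h(X,Y)=(\smat Y,\ \trs[\smat]X)$, as claimed.

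The only delicate point is the non-smoothness of $\abs$: the entrywise absolute value fails to be differentiable where an entry of $\splr$ vanishes, and $\ssgn$ is discontinuous there. Accordingly, the $\abs$ branch should be read with the surrogate differential $\hat{\dop}\abs$ of \Cref{lemm:diffcalc-sigma} (equivalently, a Clarke-type selection), with the chain rule rigorous on the open set where the active entries of $\splr$ are nonzero. The $\sigma_2$ branch is genuinely smooth and needs no such caveat. The remaining work, namely keeping the transpose correctly attached to $\exp(\cdot)$ and verifying that no stray $\po$ survives, is routine given the identities above, so I expect the transpose/adjoint bookkeeping to be the main thing to get right rather than a genuine obstacle.
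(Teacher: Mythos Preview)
Your proposal is correct and follows essentially the same route as the paper: chain rule through the composition $\tilde h\circ\sigma\circ\po\circ((X,Y)\mapsto X\trs[Y])$, followed by trace/Hadamard identities to isolate the gradient. The paper uses the identity $\trace(A(B\odot C))=\trace((A\odot\trs[B])C)$ and the relation $A\odot\po(B)=\po(A)\odot B$ where you use the equivalent inner-product forms $\langle M,P\odot\zeta\rangle=\langle M\odot P,\zeta\rangle$ and the self-adjointness of $\po$ together with $\po(\smat)=\smat$; these are the same manipulations in different notation, and your explicit treatment of both the $X$ and $Y$ blocks plus the caveat on the subdifferential of $\abs$ are welcome but not a departure in method.
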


\begin{proof}
    From \Cref{prop:trexpm-divers}-(iii), the Fr\'echet derivative
    of the exponential trace $\tilde{h}$ at $A\in\dom$ is $\dop \tilde{h}(A) [\xi] =
        \trace(\exp(A) \xi)$ for any $\xi\in\reals^{d \times d}$. By the chain
        rule and \Cref{lemm:diffcalc-sigma},  
        the Fr\'echet derivative of $h$~\eqref{eq:def-h-ours} for $\sigma=\sigma_2$ is as follows: with $\splr=\po(X\trs[Y])$,  
        \begin{align}
        & \dop_X h(X,Y)[\xi]   =  \trace\big(\exp(\sigma(\splr))
        \dop\sigma(\splr)\dop\po(X\trs[Y])[\xi\trs[Y]]\big) \nonumber \\ 
            & = \trace\big(\exp(\sigma(\splr))
            \dop\sigma(\splr)[\po(\xi\trs[Y])]\big) \\
            & = 2 \trace\big(\exp(\sigma(\splr)) (\splr\odot \po(\xi\trs[Y]))\big)
            \nonumber\\
            & = 2 \trace\big(\exp(\sigma(\splr)) (\splr\odot (\xi\trs[Y]))\big) \label{eq:prf-po-is-proj} \\
            & = 2 \trace\big((\exp(\sigma(\splr)) \odot \trs[\splr]) (\xi\trs[Y])\big)
            \label{eq:tr-odot-flipp}\\
            & = 2 \trace\big(\trs[Y](\exp(\sigma(\splr)) \odot \trs[\splr]) \xi\big),
            \nonumber
        \end{align}
where~\eqref{eq:prf-po-is-proj} holds, \ie, $\po(X\trs[Y])\odot\po(\xi\trs[Y])=\po(X\trs[Y])\odot \xi\trs[Y]$
because $A\odot \po(B) = \po(A)\odot B$ (for any $A$ and $B$) and $\po^2=\po$, and~\eqref{eq:tr-odot-flipp} holds because $\trace(A(B\odot C)) =
        \trace((A\odot\trs[B])C)$ for any $A, B, C$ (with compatible sizes). 
        By identifying the above equation with
        $\trace(\trs[\nabla_X h(X,Y)]\xi)$, we have $\nabla_X
        h(X,Y) =
        \underbrace{2\big(\trs[{\exp(\sigma(Z))}] \odot
        \splr \big)}_{\smat} Y$, hence the
        expression~\eqref{eq:s-grad-h-odot} for $\smat$. The expression of
        $\sabs$ for $\sigma=\abs$ can be obtained using the same
        calculations and~(\ref{eq:def-po-abs}b). 
\end{proof}

The computational bottleneck for the exact computation of the gradient~\eqref{eq:gradh-ours} lies in the computation of the $d\times d$ matrix $\smat$~\eqref{eq:s-grad-h-abs} and is due to the difference between the Hadamard product and matrix multiplication; see \Cref{ssec-app:compgrad} for details. %
Nevertheless, we note that the multiplication $(\smat,X)\to \smat X$ is similar to the action of matrix exponentials of the form $(A,X)\to \exp(A)X$, which can be computed using only a number of repeated multiplications of a $d\times d$ matrix with the thin matrix $X\in\reals^{d\times r}$ based on Al-Mohy and Higham's results~\cite{al2011computing}.

The difficulty in adapting the method of~\cite{al2011computing} also lies in the presence of the Hadamard product in $\smat$~\eqref{eq:s-grad-h-odot}--\eqref{eq:s-grad-h-abs}. 
Once the sparse $d\times d$ matrix
$A:=\sigma(\splr)$ in~\eqref{eq:s-grad-h-odot}--\eqref{eq:s-grad-h-abs} is
obtained (using \Cref{alg:splr-prod}, \Cref{sec-app:algs}), the exact computation of $(A,C,B)\to (\exp(A)\odot C)B$, using the Taylor expansion of $\exp(\cdot)$ to a certain order $m_*$, is to compute $\frac{1}{k!}(A^k\odot C)B$ at each iteration, which inevitably requires the computation of the $d\times d$ matrix product $A^k$ (in the form of $A(A^{k-1})$) before computing the Hadamard product, which requires an $O(d^3)$ cost. %

To alleviate the obstacle above, we propose to use inexact\footnote{It is inexact because $((A\odot C)^{k+1})B \neq (A^{k+1}\odot C)B$.} %
incremental multiplications; see \Cref{alg:splr-expmv-inexa}. 

\begin{algorithm}[htbp]
    \caption{Approximation of $(A,C,B)\to (\exp(A)\odot C)B$\label{alg:splr-expmv-inexa}}  
\begin{algorithmic}[1]
    \REQUIRE{ $d\times d$ matrices $A$ and $C$, thin matrix $B\in\reals^{d\times \rkval}$, tolerance $\tol>0$ } \\
\ENSURE{$F\approx(\exp(A)\odot C)B\in\reals^{d\times \rkval}$}
\STATE Estimate the Taylor order parameter $m_*$ from $A$ \hfill\#
using~\cite[Algorithm~3.2]{al2011computing} 
\STATE Initialize: let $F= (I\odot C )B$ %
\FOR{$k = 1,\dots, m_*$} 
\STATE $B \leftarrow \frac{1}{k+1}(A\odot C)B$ %
\STATE $F \leftarrow F+B$ 
\STATE $B \leftarrow F$ 
\ENDFOR
\STATE Return $F$. 
\end{algorithmic} 
\end{algorithm}

In line~1 of \Cref{alg:splr-expmv-inexa}, the value of $m_*$ is obtained
from numerical analysis results of~\cite[Algorithm 3.2]{al2011computing};
often, the value of $m_*$, depending on the spectrum of $A$, is a bounded 
constant (independent of the matrix size $d$). %
Therefore, the dominant cost of \Cref{alg:splr-expmv-inexa} is $2m_*|\Omega|r\lesssim d^2r$, since each iteration (lines 4--6) costs $(2|\Omega|r+dr)\approx 2|\Omega|r$ flops. \Cref{tab:compl} summarizes this computational property in comparison with existing methods. %

\paragraph{Reliability of \Cref{alg:splr-expmv-inexa}.}
\label{ssec:reliability}
The accuracy of \Cref{alg:splr-expmv-inexa} with respect to the exact
computation of $(A,C,B) \to (\exp(A)\odot C)B$ depends notably on the scale of
$A$, since the differential $\dop\exp(A)$ at $A$ has a greater operator norm when the
norm of $A$ is greater; see \Cref{prop:expm-diff} in \Cref{sec-app:proofs}. 

To illustrate the remark above, we approximate $\nabla h(X,Y)$~\eqref{eq:gradh-ours} by \Cref{alg:splr-expmv-inexa} on random points of $\prodsp$ with
different scales, with $\Omega$ defined from the edges of a random sparse graph; the results are shown in \Cref{fig:tbm-accvsca}. 

\begin{figure}[htbp]
\centering 
\subfigure[Relative error]{\includegraphics[width=0.42\textwidth]{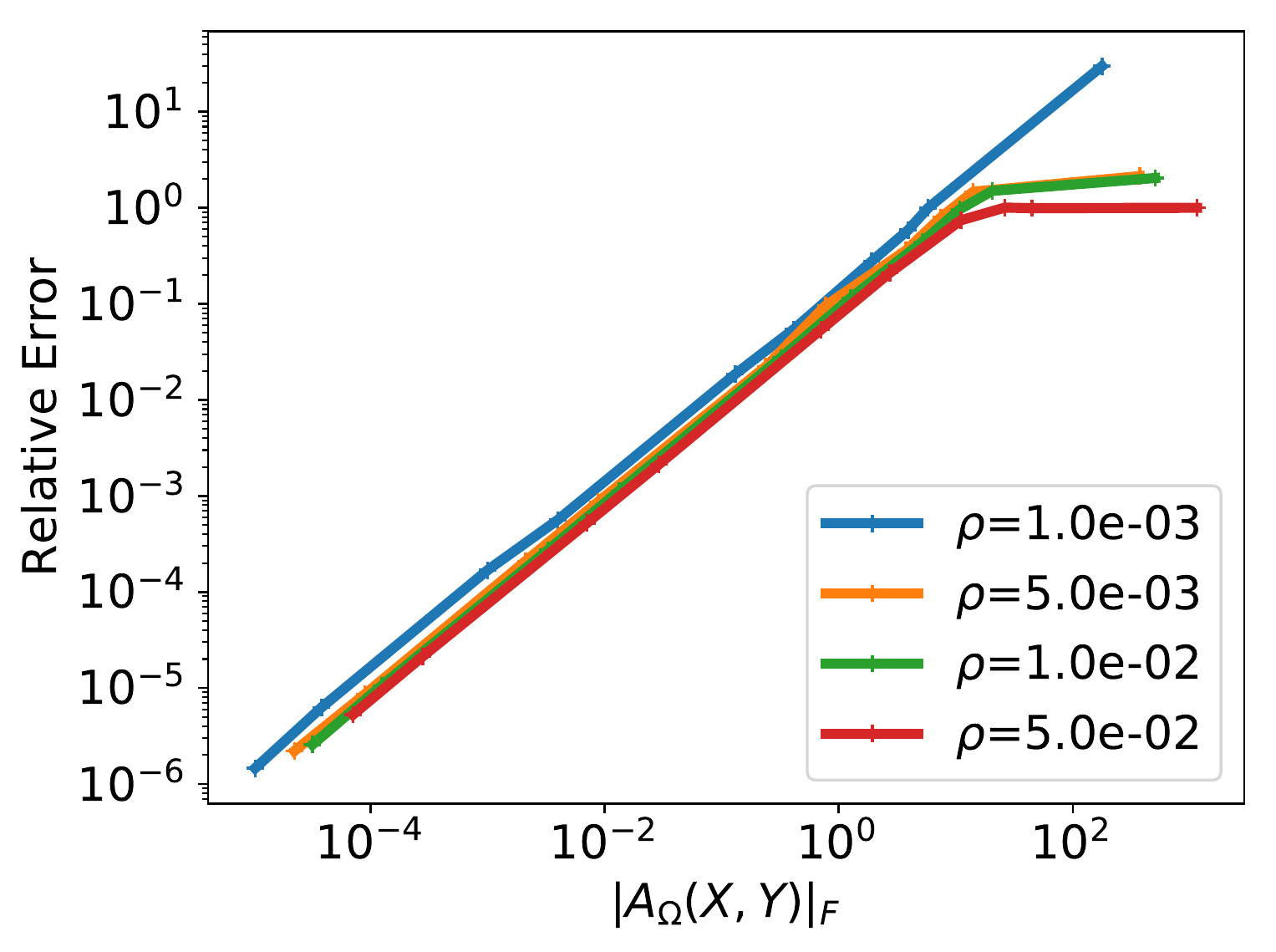}} 
\quad\qquad 
\subfigure[Cosine similarity]{\includegraphics[width=0.41\textwidth]{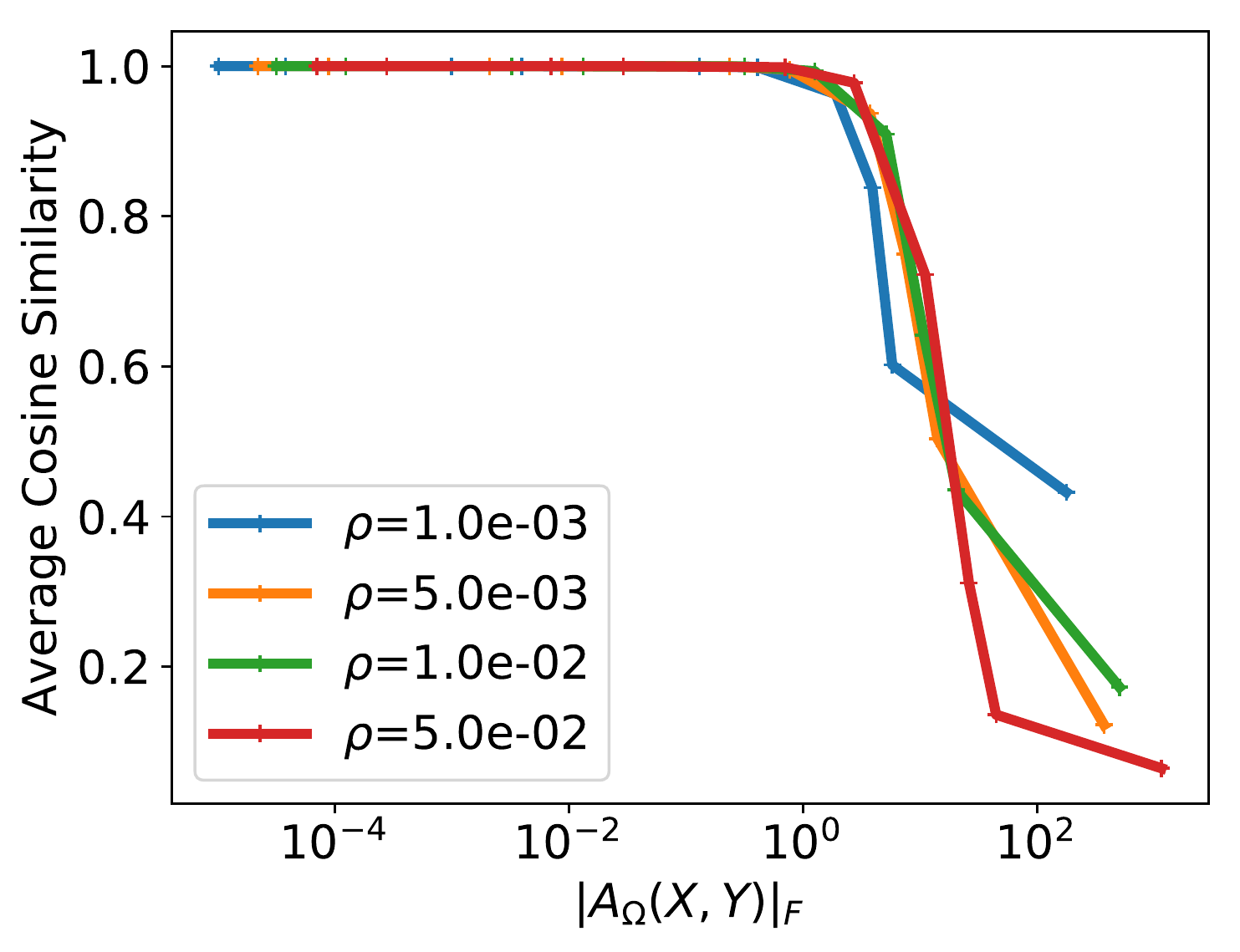}}
\caption{Average accuracy of \Cref{alg:splr-expmv-inexa} in approximating $\nabla h(X,Y)$~\eqref{eq:gradh-ours} at $(X,Y)$ with different scales and sparsity levels $\rho=\frac{|\Omega|}{d^2}$. Number of nodes $d=200$, $r=40$.} %
\label{fig:tbm-accvsca}
\end{figure}

We observe from \Cref{fig:tbm-accvsca} that \Cref{alg:splr-expmv-inexa} is reliable, \ie, having gradient approximations with cosine similarity close to $1$, when the norm of $\splr(X,Y)$ is sufficiently bounded. More precisely, for $c_0=10^{-1}$, \Cref{alg:splr-expmv-inexa} is reliable in the following set 
\begin{align}
    \label{eq:set-approx-safe}
    \mathcal{D}(c_0)=\{ (X,Y) \in \prodsp: \fro{\splr(X,Y)} \leq c_0 \}.
\end{align} 
 
The degrading accuracy of \Cref{alg:splr-expmv-inexa}
outside $\mathcal{D}(c_0)$~\eqref{eq:set-approx-safe} can nonetheless be avoided for the projection
problem~\eqref{prog:main-generic}, in particular, through {\it rescaling}
of the input matrix $Z_0$. Note that the edge set of any graph is invariant to the scaling of its weighted
adjacency matrix, 
and that any DAG 
$A^*$ solution to~\eqref{prog:main-generic} satisfies $\fro{A^*}\lesssim
\fro{Z_0}$ since $\suppo(A^*)\subset\suppo(Z_0)$ (see \Cref{rmk:omega-projdag}). Hence 
it suffices to rescale $Z_0$ with a small enough scalar, \eg, replace $Z_0$ with $Z_0'=
\frac{c_0}{10\fro{Z_0}}Z_0$, without loss of generality, in~\eqref{prog:main-generic}--\eqref{prog:projdag}. Indeed, this rescaling ensures that
both the input matrix $Z_0'$ and matrices like ${A^*}'=\frac{c_0}{10\fro{Z_0}}A^*$---a
DAG matrix equivalent to $A^*$---stay confined in the image (through \ourmo) of 
$\mathcal{D}(c_0)$~\eqref{eq:set-approx-safe},  
in which the gradient approximations by \Cref{alg:splr-expmv-inexa} are reliable.

\subsection{Accelerated gradient descent}\label{ssec:comp-gradh-inexact}

Given the gradient computation method (\Cref{alg:splr-expmv-inexa}) for the $h$
function, we adapt Nesterov's accelerated gradient descent~\cite{nesterov1983,Nesterov:2014:ILC:2670022} to solve~\eqref{prog:projdag}. 
The accelerated gradient descent is used for its superior performance than vanilla gradient descent in many convex and nonconvex problems while it also only requires first-order information of the objective function. %
Details of this algorithm for our LoRAM optimization is given in
\Cref{alg:solver-agd}. %

\begin{algorithm}[htpb]
\caption{Accelerated Gradient Descent of \ourmo~(\ouralg) \label{alg:solver-agd}} 
\begin{algorithmic}[1]
\REQUIRE{Initial point $x_0=(X_0, Y_0)\in\prodsp$, objective function $F=f+h$
with $h$ defined in~\eqref{eq:def-h-ours}, tolerance $\epsilon$.} 
\ENSURE{$x_t \in\prodsp$}
\STATE Make a gradient descent: $x_{1} = x_0 - s_0 \nabla F(x_0)$ for
an initial stepsize $s_0>0$  
\STATE Initialize: $y_0 = x_0$, $y_1 = x_1$, $t=1$. 
\WHILE{$\|\nabla F(x_t)\| > \epsilon$} 
\STATE Compute $\nabla F(y_t) =\nabla f(y_t) + \nabla h(y_t)$ \hfill\# using
\Cref{alg:splr-expmv-inexa} for $\nabla h(y_t)$~\eqref{eq:gradh-ours}
 \STATE Compute the Barzilai--Borwein stepsize: \label{alg:line-bb} 
        $s_t =  \frac{\|z_{t-1}\|^2}{\braket{z_{t-1},w_{t-1}}}$, 
where $z_{t-1} = y_t - y_{t-1}$ and $w_{t-1} = \nabla F(y_t) - \nabla F(y_{t-1})$. 
\STATE Updates with Nesterov's acceleration:  
        \begin{align}
            x_{t+1} & = y_{t} - s_t \nabla F(y_t), \label{eq:agd-upd1}  \\ 
            y_{t+1} & = x_{t+1} + \frac{t}{t+3} (x_{t+1} - x_t).  \nonumber 
        \end{align}
        \STATE $t=t+1$ 
\ENDWHILE
\end{algorithmic}
\end{algorithm}

Specifically, in line 5 of \Cref{alg:solver-agd}, the Barzilai--Borwein (BB)
stepsize~\cite{barzilai1988two} is used for the descent step~\eqref{eq:agd-upd1}. %
The computation of the BB stepsize $s_t$ requires evaluating the inner products $\braket{z_{t-1}, w_{t-1}}$ and
the norm $\|z_{t-1}\|$, where $z_{t-1},w_{t-1}\in\prodsp$; we choose the
Euclidean inner product as the metric on $\prodsp$: 
\[\braket{z, w} = \trace(\trs[z^{(1)}]w^{(1)}) +
\trace(\trs[z^{(2)}]w^{(2)}) \]
for any pair of points $z=(z^{(1)}, z^{(2)})$ and $w=(w^{(1)}, w^{(2)})$ on $\prodsp$. %
Note that one can always use backtracking line search based on the stepsize estimation (line~\ref{alg:line-bb}). %
We choose to use the BB stepsize directly since it does not require any evaluation of the objective function, and thus avoids 
the nonnegligeable costs for computing the matrix exponential trace in $h$~\eqref{eq:def-h-ours}. 
We refer to~\cite{carmon2018accelerated} for a comprehensive view on
accelerated methods for nonconvex optimization. 
Due to the nonconvexity of $h$~\eqref{eq:def-h-ours} (see \Cref{prop:trexpm-divers}) and thus~\eqref{prog:projdag}, 
we aim at finding stationary points of~\eqref{prog:projdag}. In particular, empirical
results in \Cref{ssec:exp-scal} show that the solutions by the proposed
method, with close-to-zero or even zero SHDs to the true DAGs, are close to global optima in practice. %

%
%
 
\section{Experimental validation}
\label{sec:exps}

This section investigates the performance (computational gains and accuracy loss) of the proposed gradient approximation method (\Cref{alg:splr-expmv-inexa}) and thereafter reports on the performance of the \ourmo\ projection~\eqref{prog:main-generic}, compared to \notears~\cite{NEURIPS2018_e347c514}. Sensitivity to the rank parameter $r$ of the proposed method is also investigated.


The implementation is available at \url{https://github.com/shuyu-d/loram-exp}.

\subsection{Gradient computations}
\label{ssec:exp-compgrad}
We compare the performance of \Cref{alg:splr-expmv-inexa} in gradient
approximations with the exact computation in the following settings: the number
of nodes $d\in \{100$, $500$, $10^3$, $2.10^3$, $3.10^3$, 
$5.10^3\}$, $r=40$, and the sparsity ($\frac{|\Omega|}{d^2}$) of the index set $\Omega$ tested are $\rho\in
\{10^{-3}, 5.10^{-3}, 10^{-2}, 5.10^{-2}\}$. 
%
%
The results shown in \Cref{fig:tbm-} are based on the computation of $\nabla h(X,Y)$~\eqref{eq:gradh-ours} on randomly generated points $(X,Y)\in\prodsp$, where $X$ and $Y$ are Gaussian matrices. %

\begin{figure}[htbp]
\centering 
\subfigure[Computation
time]{\includegraphics[width=0.55\textwidth]{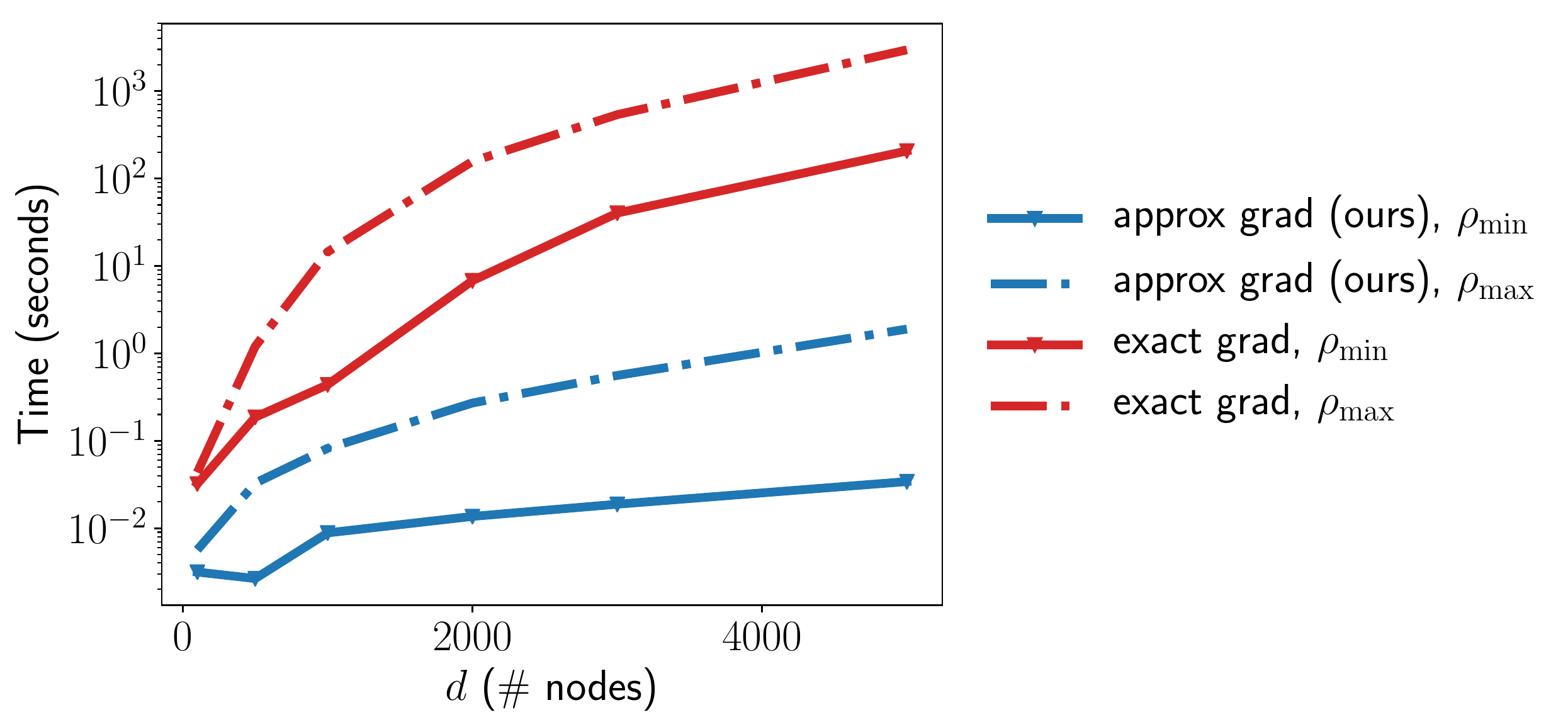}}
~
\subfigure[Accuracy]{\includegraphics[width=0.42\textwidth]{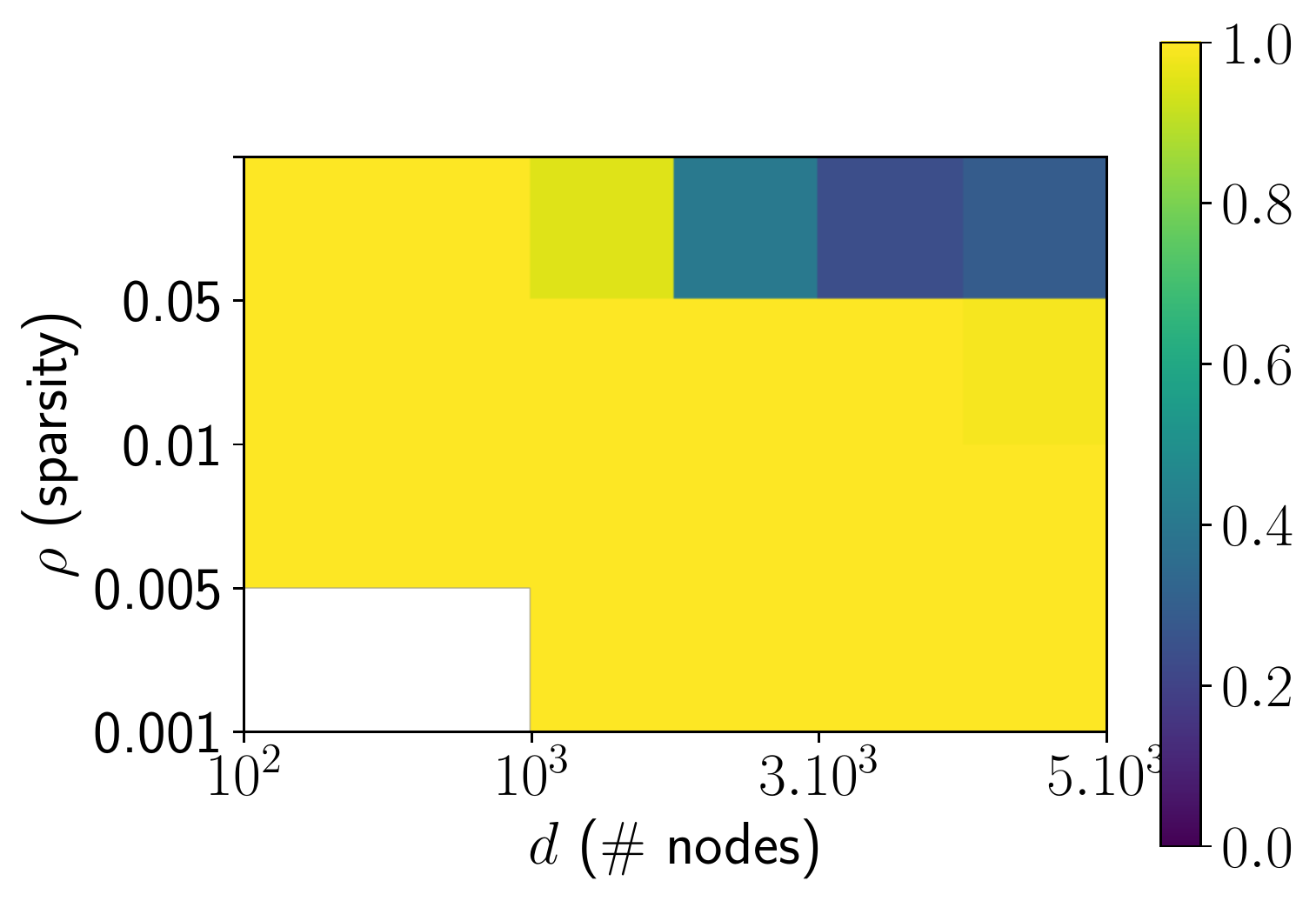}} 
\caption{(a): Runtime (in log-scale) for computing $\nabla h(X,Y)$. (b): Cosine similarities bewtween the approximate and the exact gradients.}
\label{fig:tbm-}
\end{figure}


From \Cref{fig:tbm-} (a), \Cref{alg:splr-expmv-inexa} shows a significant reduction in the runtime for computing the gradient of 
$h$~\eqref{eq:def-h-ours} at the expense of a very moderate loss of accuracy
(\Cref{fig:tbm-} (b)): the approximate gradients are mostly sufficiently
aligned with exact gradients in the considered range of graph sizes and sparsity levels. 

%
\subsection{Sensitivity to the rank parameter $r$}  
\label{ssec:exp-tsens}

In this experiment, 
we generate the input matrix $Z_0$ of problem~\eqref{prog:main-generic} as follows: 
\begin{align}
    & Z_0 = A^\star + E,  \label{eq:def-zstar-2} 
\end{align}
where $A^\star$ is a given $d\times d$ DAG matrix and $E$ is a graph containing
additive noisy edges that break the acyclicity of the observed graph $Z_0$. 

The ground truth DAG matrix $A^\star$ is generated from the acyclic
Erd\H{o}s-R\'enyi (ER) model (in the same way as
in~\cite{NEURIPS2018_e347c514}), with a sparsity rate $\rho\in \{10^{-3}, 5.10^{-3}, 10^{-2}\}$. 
The noise graph $E$ of~\eqref{eq:def-zstar-2} is defined as $E = \sigma_E \trs[A^\star]$, 
which consists of edges that create a confusion between causes and effects, since these edges are reversed, pointing from the ground-truth effects to their respective causes. 
We evaluate the performance of \ouralg\ in the proximal mapping computation~\eqref{prog:projdag} for $d=500$ and different values of the rank parameter $r$. 
In all these evaluations, the candidate set $\Omega$ is fixed to be $\suppo(Z_0)$; see \Cref{rmk:omega-projdag}. 

We measure the accuracy of the projection result by the false discovery rate (FDR, lower is better), false positive rate (FPR), true positive rate (TPR, higher is better), and the structural Hamming distance
(SHD, lower is better) of the solution compared to the DAG $\grp(A^\star)$. %

\begin{figure}[htpb]
\centering 
\includegraphics[width=0.41\textwidth]{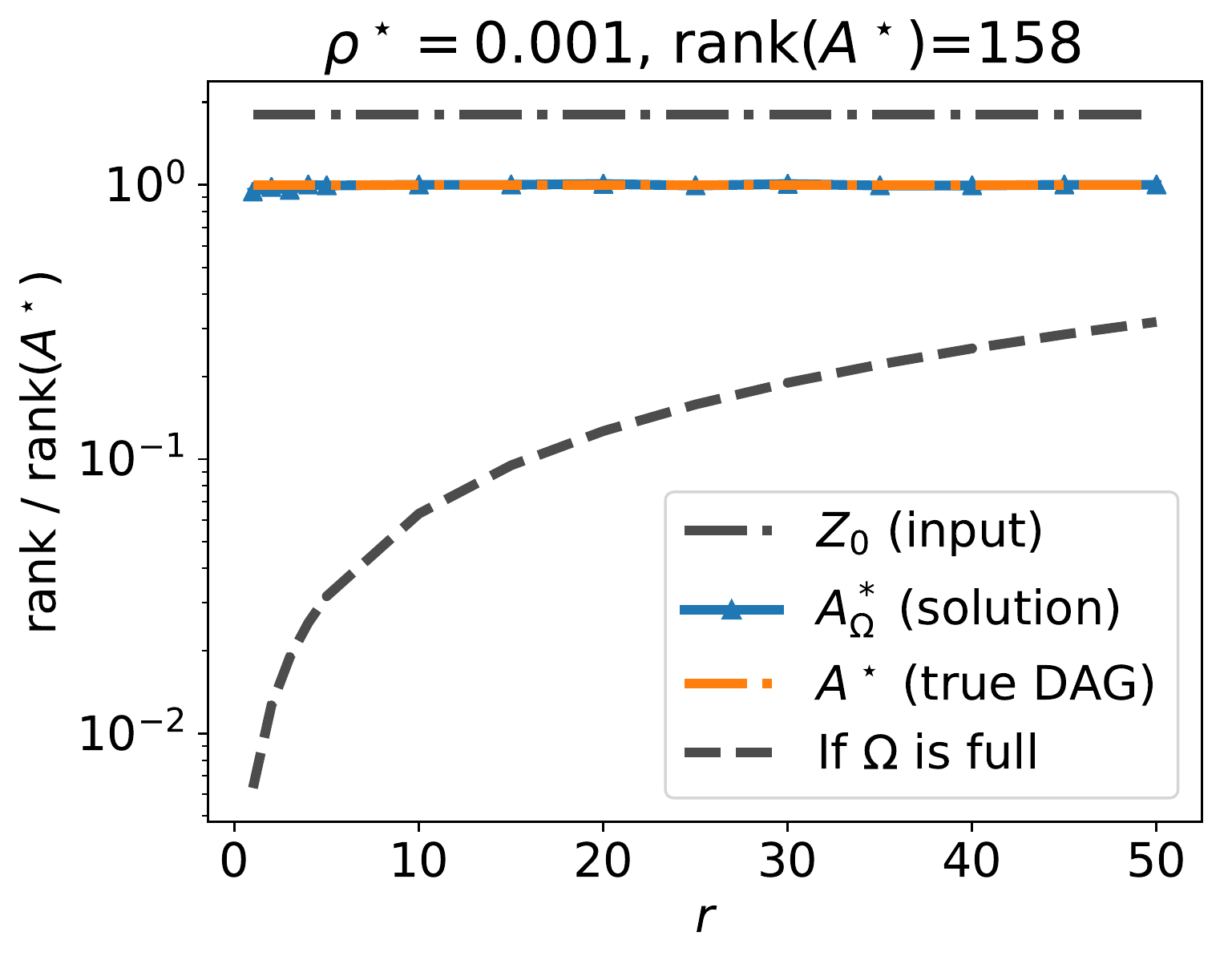}
\quad\qquad 
\includegraphics[width=0.41\textwidth]{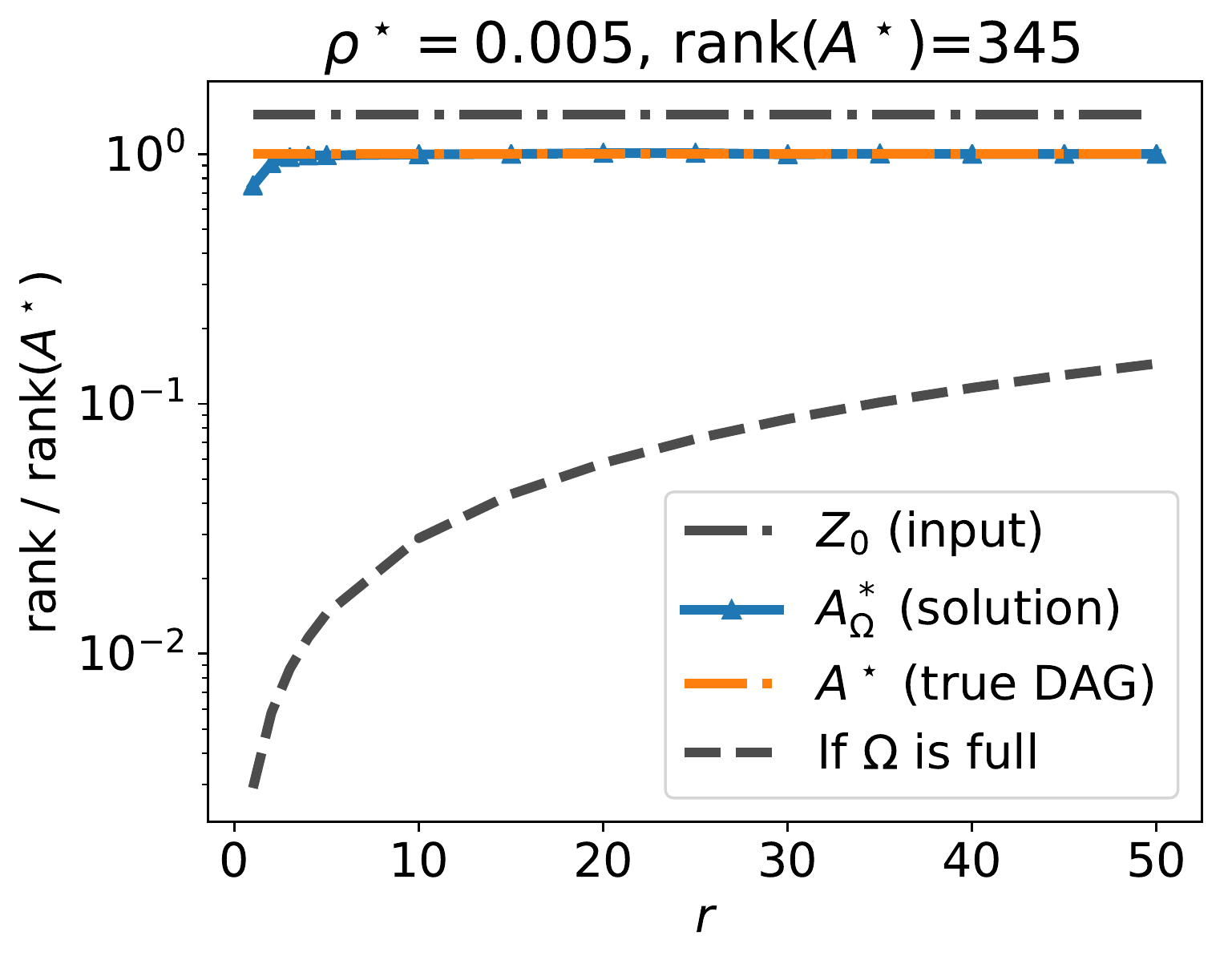} 
\\ 
\includegraphics[width=0.41\textwidth]{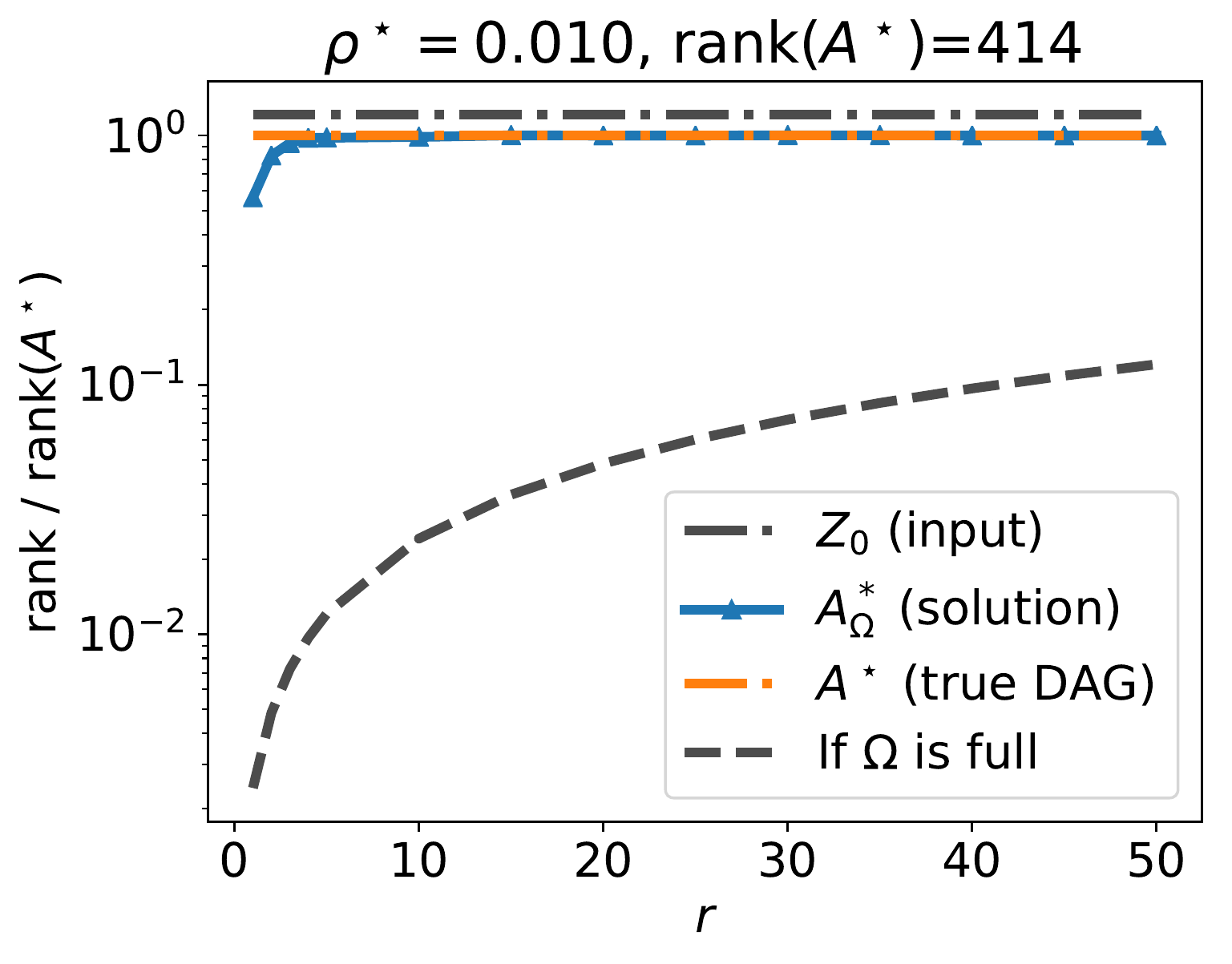}  
\quad\qquad 
\includegraphics[width=0.41\textwidth]{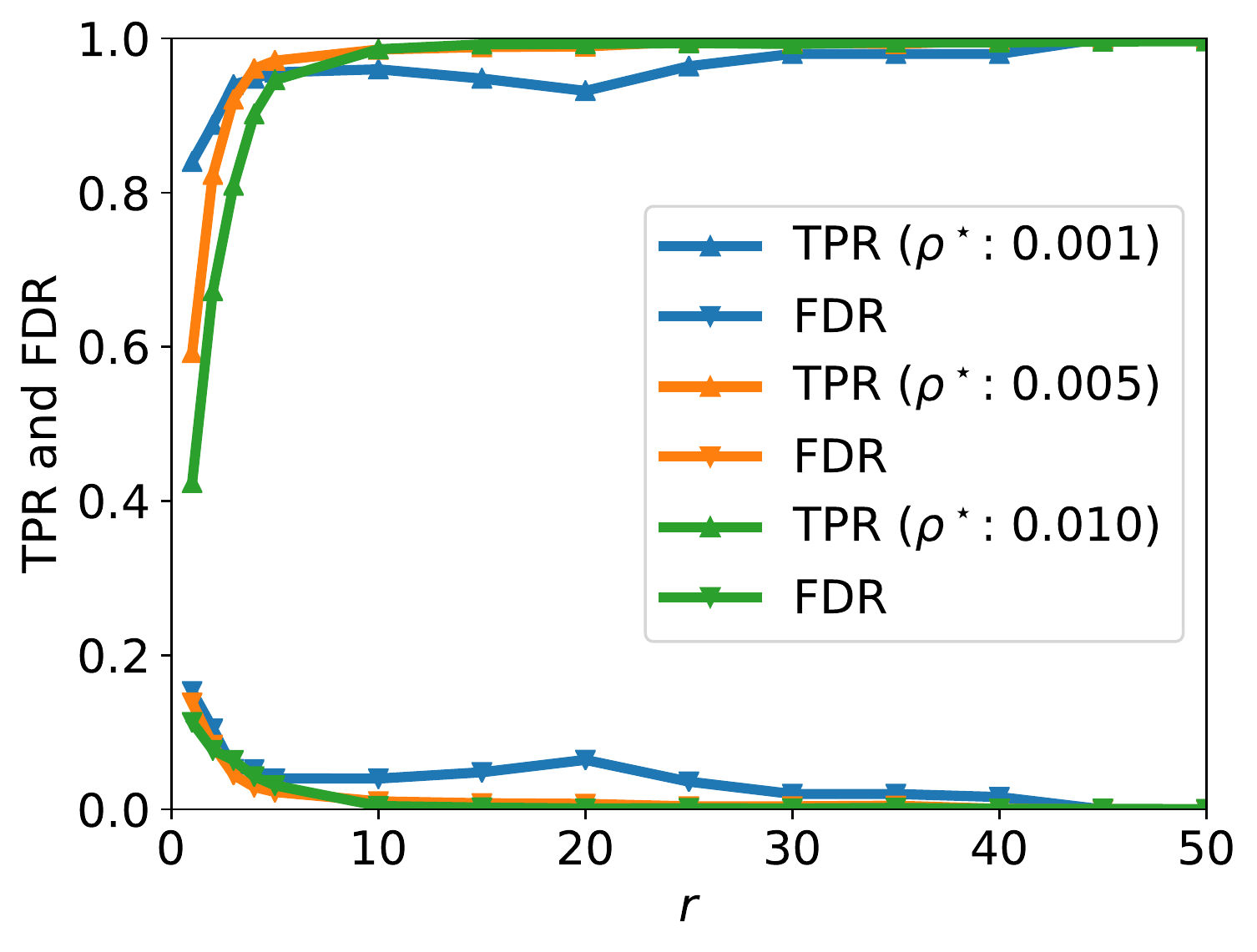}  
%
\caption{
Rank profiles and projection accuracy for different values of $r$ (number of columns of $X$ and $Y$ in~\eqref{eq:def-splr-rep}). The number of nodes $d=500$. The sparsity of the input non-DAG matrix $Z_0=A^\star +
\sigma_{E}\trs[{A^\star}]$~\eqref{eq:def-zstar-2} are $\rho^\star \in \{10^{-3}, 5.10^{-3}, 10^{-2}\}$. 
\label{fig:tsens-ranks}
}
\end{figure}

The results in \Cref{fig:tsens-ranks} suggest that:

\begin{itemize}
    \item[(i)] For each sparsity level, increasing the rank parameter $r$ generally improves the projection accuracy of the \ourmo. 
    \item[(ii)] While the rank parameter $r$ of~\eqref{eq:def-splr-rep} attains
        at most around $5$, which is only $\frac{1}{100}$-th the rank of the
        input matrix $Z_0$ and the ground truth $A^\star$, the rank of
        the solution $\splr^*=\splr(X^*,Y^*)$ attains the same value as
        $\rank(A^\star)$. This means that the rank representativity of
        the \ourmo\ goes beyond the value of~$r$. This phenomenon is
        understandable in the present case where the candidate set
        $\Omega=\suppo(Z_0)$ is fairly close 
        to the sparse edge set $\suppo(A^\star)$. 
    \item[(iii)] The projection accuracy in TPR and FDR (and also SHD, see \Cref{fig-app:tsens-ranks} in \Cref{ssec-app:exp-details}) of \ouralg\
        is close to optimum 
        on a wide interval $25 \leq r \leq 50$ of the tested ranks and are fairly stable on this interval.
\end{itemize}

\subsection{Scalability}
\label{ssec:exp-scal}

We examine two different types of noisy edges (in $E$) as follows. Case (a):
Bernoulli-Gaussian $E=E(\sigma_E, p)$, where
$E_{ij}(\sigma_E,p)\neq 0$ with probability $p$ and all nonzeros of $E(\sigma_E,p)$ are i.i.d.\ samples of
$\mathcal{N}(0,\sigma_E)$. Case (b): cause-effect confusions $E = \sigma_E \trs[A^\star]$ as in~\Cref{ssec:exp-tsens}. 

The initial factor matrices $(X,Y)\in\prodsp$ are random Gaussian matrices. %
For the \ourmo, we set $\Omega$ to be the support of $Z_0$; see
\Cref{rmk:omega-projdag}. %
The penalty parameter $\lambda$ of~\eqref{prog:projdag} is varied in
$\{2.0, 5.0\}$ with no fine tuning.

In case (a), we test with various noise levels for $d=500$ nodes. 
In case (b), we test on various graph dimensions, for $(d,\rkval)\in\{100,200,\dots,2000\}\times \{40, 80\}$. The results are given in \Cref{tab:benchm-projdag-0}--\Cref{tab:benchm-projdag-1} respectively. %

\begin{table}[htpb]
\footnotesize
\centering
\caption{Results in case (a): the noise graph is $E(\sigma_E,p)$ for $p = 5.10^{-4}$ and $d=500$. \label{tab:benchm-projdag-0}}
\begin{tabular}{c|cccc}
\hline\hline
\multirow{2}{*}{$\sigma_E$} & \multicolumn{4}{c}{\ourmo\ (ours) / \notears}  \\ 
&   {Runtime (sec)}  & TPR & FDR &  SHD \\
\hline                   
 0.1    & $\quad$1.34 / 5.78  $\quad$ & 1.0 / 1.0 $\quad$    & 9.9e-3 / 0.0e+0 $\quad$  & 25.0 / 0.0  \\ 
 0.2    & $\quad$2.65 / 11.58 $\quad$ & 1.0 / 1.0 $\quad$    & 9.5e-3 / 0.0e+0 $\quad$  & 24.0 / 0.0  \\ 
 0.3    & $\quad$1.35 / 28.93 $\quad$ & 1.0 / 1.0 $\quad$    & 9.5e-3 / 8.0e-4 $\quad$  & 24.0 / 2.0  \\ 
 0.4    & $\quad$1.35 / 18.03 $\quad$ & 1.0 / 1.0 $\quad$    & 9.9e-3 / 3.2e-3 $\quad$  & 25.0 / 9.0  \\ 
 0.5    & $\quad$1.35 / 12.52 $\quad$ & 1.0 / 1.0 $\quad$    & 9.9e-3 / 5.2e-3 $\quad$  & 25.0 / 13.0 \\ 
 0.6    & $\quad$2.57 / 16.07 $\quad$ & 1.0 / 1.0 $\quad$    & 9.5e-3 / 4.4e-3 $\quad$  & 24.0 / 11.0 \\ 
 0.7    & $\quad$1.35 / 18.72 $\quad$ & 1.0 / 1.0 $\quad$    & 9.9e-3 / 5.2e-3 $\quad$  & 25.0 / 13.0 \\ 
 0.8    & $\quad$1.35 / 32.03 $\quad$ & 1.0 / 1.0 $\quad$    & 9.9e-3 / 4.8e-3 $\quad$  & 25.0 / 15.0 \\ 
\hline\hline
\end{tabular}
\end{table}

\begin{table}[htpb]
\footnotesize
\centering
\caption{Results in case (b): the noise graph $E=\sigma
_E\trs[A^\star]$ contains cause-effect confusions for $\sigma_E = 0.4$. \label{tab:benchm-projdag-1}}
\begin{tabular}{c|c|cccc}
\hline\hline
\multirow{2}{*}{$(\lambda,\rkval)$} & \multirow{2}{*}{$d$}  & \multicolumn{4}{c}{\ourmo\ (ours) / \notears}  \\ 
& & {Runtime (sec)}  & TPR & FDR &  SHD \\
\hline                   
(5.0, 40)          & 100        & $\quad$ 1.82  / 0.67   $\quad$      & 1.00 / 1.00 $\quad$ & 0.00e+0  / 0.0 $\quad$ & 0.0  / 0.0    \\   
(5.0, 40)          & 200        & $\quad$ 2.20  / 3.64   $\quad$      & 0.98 / 0.95 $\quad$ & 2.50e-2  / 0.0 $\quad$ & 1.0  / 2.0    \\
(5.0, 40)          & 400        & $\quad$ 2.74  / 16.96  $\quad$      & 0.98 / 0.98 $\quad$ & 2.50e-2  / 0.0 $\quad$ & 4.0  / 4.0    \\
(5.0, 40)          & 600        & $\quad$ 3.40  / 42.65  $\quad$      & 0.98 / 0.96 $\quad$ & 1.67e-2  / 0.0 $\quad$ & 6.0  / 16.0   \\
(5.0, 40)          & 800        & $\quad$ 4.23  / 83.68  $\quad$      & 0.99 / 0.97 $\quad$ & 7.81e-3  / 0.0 $\quad$ & 5.0  / 22.0   \\
(2.0, 80)          & 1000       & $\quad$ 7.63  / 136.94 $\quad$      & 1.00 / 0.96 $\quad$ & 0.00e+0  / 0.0 $\quad$ & 0.0  / 36.0   \\
(2.0, 80)          & 1500       & $\quad$ 13.34 / 437.35 $\quad$      & 1.00 / 0.96 $\quad$ & 8.88e-4  / 0.0 $\quad$ & 2.0  / 94.0   \\
(2.0, 80)          & 2000       & $\quad$ 20.32 / 906.94 $\quad$      & 1.00 / 0.96 $\quad$ & 7.49e-4  / 0.0 $\quad$ & 3.0  / 148.0  \\
\hline\hline
\end{tabular}
\end{table}

\begin{figure}[htbp]
\centering 
\subfigure[Iterations of \ouralg\ (for $d=1000$)]{\includegraphics[width=0.46\textwidth]{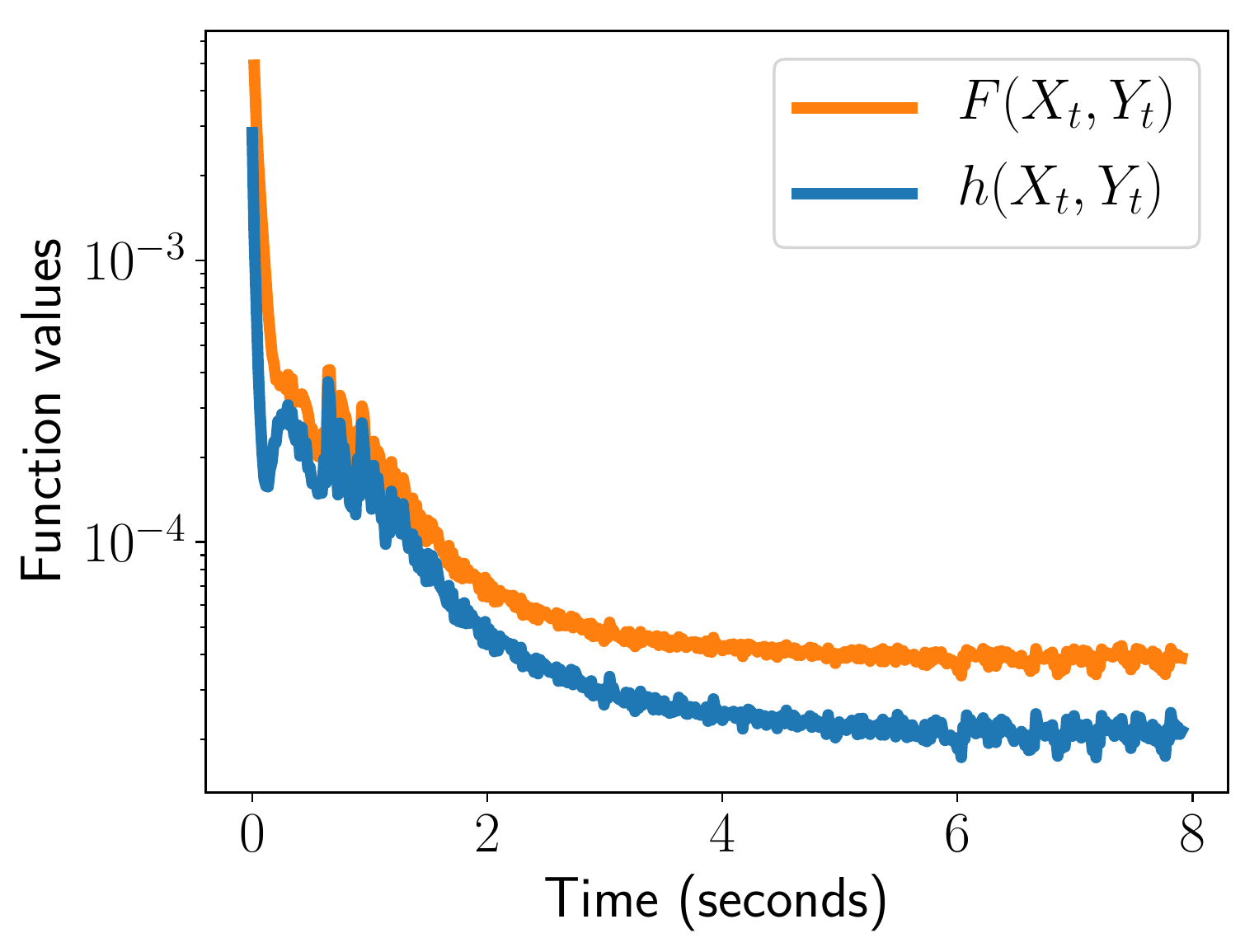}}
\qquad 
\subfigure[Runtime vs number of nodes]{\includegraphics[width=0.45\textwidth]{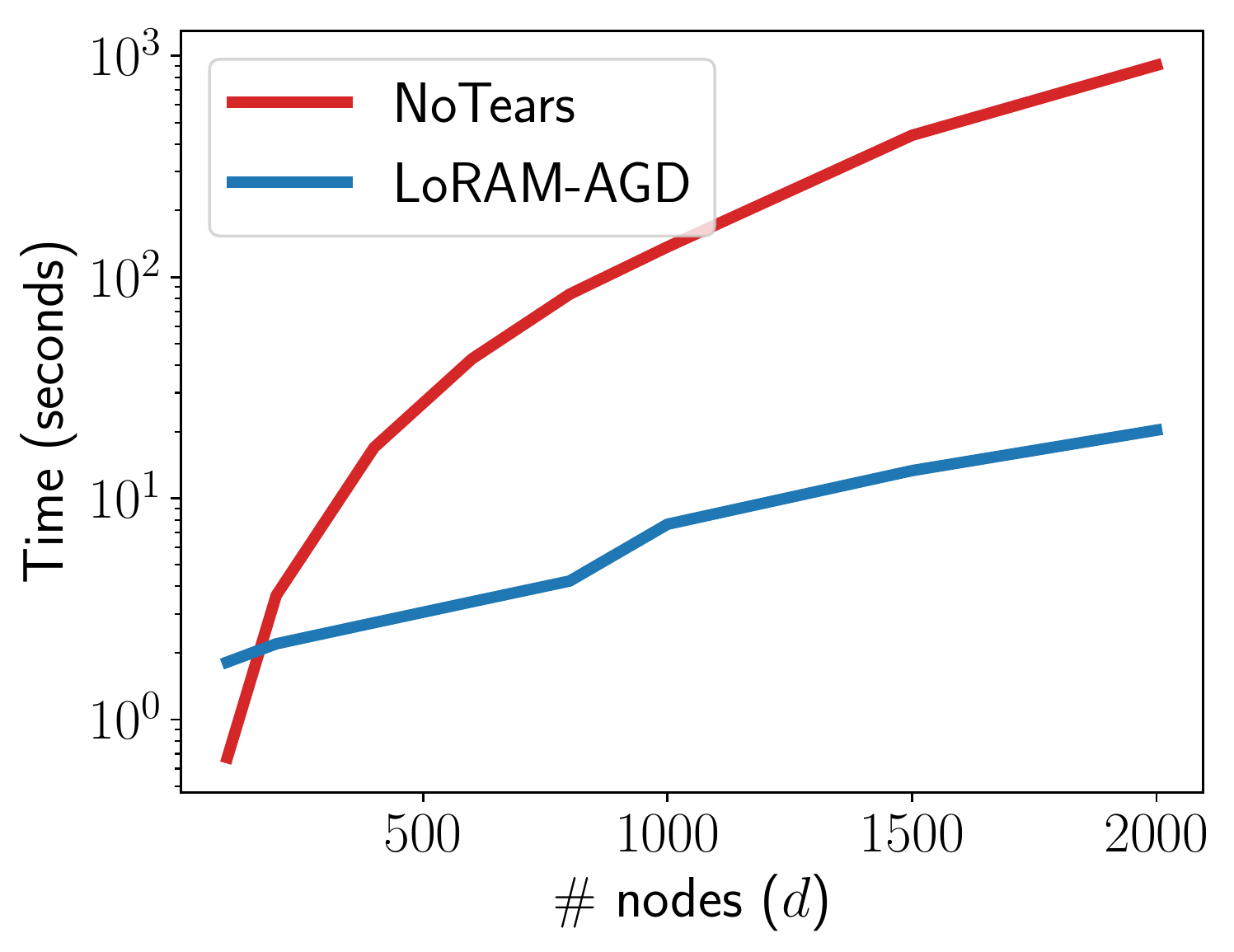}}  
\caption{(a): An iteration history of \ouralg\ for~\eqref{prog:projdag} with $d=1000$. (b): Runtime (in log-scale) comparisons for different number $d$ of nodes.}
\label{fig:tbm-iterh}
\end{figure}

The results in \Cref{tab:benchm-projdag-0}--\Cref{tab:benchm-projdag-1} show
that: 
\begin{itemize}
    \item[(i)] In case (a), the solutions of \ouralg\ are close to the ground truth
despite slighly higher errors than \notears\ in terms of
FDR and SHD. %
\item[(ii)] In case (b), the solutions of \ouralg\ are almost identical to the ground truth $A^\star$ In~\eqref{eq:def-zstar-2} in all performance indicators (also see \Cref{ssec-app:exp-details}). 
\item[(iii)] In terms of computation time (see \Cref{fig:tbm-iterh}), the proposed
\ouralg\ achieves significant speedups (around $50$ times faster when $d=2000$) compared to \notears\, and also has a smaller growth rate with respect
to the problem dimension $d$, showing good scalability. 
\end{itemize}

\section{Discussion and Perspectives}
This paper tackles the projection of matrices on DAG matrices, motivated by the identification of linear causal graphs. The line of research built upon the LiNGAM algorithms~\cite{shimizu2006linear,shimizu2011directlingam} has recently been revisited through the formal characterization of DAGness in terms of a continuously differentiable constraint by~\cite{NEURIPS2018_e347c514}. The \notears~approach of~\cite{NEURIPS2018_e347c514} however suffers from an %
$O(d^3)$ complexity in the number $d$ of variables, precluding its usage for large-scale problems.

Unfortunately, this difficulty is not related to the complexity of the model (number of parameters of the model): the low-rank approach investigated by \notears-low-rank~\cite{fang2020low} also suffers from the same $O(d^3)$ complexity, incurred in the gradient-based optimization phase. 

The present paper addresses this difficulty by combining a sparsification mechanism with the low-rank model and using a new approximated gradient computation. This approximated gradient takes inspiration from the approach of~\cite{al2011computing} for computing the action of exponential matrices based on truncated Taylor expansion. This approximation eventually yields a complexity of $O(d^2r)$, where the rank parameter is small ($r \leq C \ll d$).
The experimental validation of the approach shows that the approximated gradient entails no significant error with respect to the exact gradient, for \ourmo\ matrices with a bounded norm, in the considered range of graph sizes ($d$) and sparsity levels. The proposed algorithm combining the approximated gradient with a Nesterov's acceleration
method~\cite{nesterov1983,Nesterov:2014:ILC:2670022} yields gains of orders of magnitude in computation time compared to \notears\ on the same artificial benchmark problems. The approximation performance indicators reveal almost no performance loss for the projection problem in the setting of case (b) (where the matrix to be projected is perturbed with anti-causal links), while it incurs minor losses in terms of false discovery rate (FDR) in the setting of case (a) (with random additive spurious links). 

Further developments aim to extend the approach and address the identification
of causal DAG matrices from observational data. A longer term perspective is to extend \ourmo\ to the non-linear case, building upon the introduction of latent causal variables and taking inspiration from the non-linear independent component analysis and generalized contrastive losses~\cite{hyvarinen2019nonlinear}. Another perspective relies on the use of auto-encoders to yield a compressed representation of high-dimensional data, while constraining the structure of the encoder and decoder modules to enforce the acyclic property.

%
%

\section*{Acknowledgement}
The authors warmly thank Fujitsu Laboratories LTD who funded the first author, and in particular Hiroyuki Higuchi and Koji Maruhashi for many discussions.
%

\newcommand{\etalchar}[1]{$^{#1}$}

%

\appendix

\section{Proofs (Sections~\ref{sec:prelims}--\ref{sec:projdag-algs})}
\label{sec-app:proofs}

\paragraph{Proof of \Cref{thm:dag-hfunc}.} 
    By the condition (i), all entries of the matrix $B:=\sigma(A)$ are nonnegative. Hence, for all $k\geq 1$, 
    \begin{align}\label{eq:trb}
        \trace(B^k)\geq 0. 
    \end{align}
    By combining~\eqref{eq:trb} and (ii) that $\suppo(B)=\suppo(A)$,
    we confirm that $\trace(B^k)$ equals the sum of the weighted edge-sum along
    all $k$-cycles in the
    graph of $A$ (the graph whose adjacency matrix is $\suppo(A)$), according to classical graph
    theory (which can be obtained by induction). 
    Therefore, if $A$ is a DAG
    matrix, \ie, there are no cycles (of any length) in the
    graph of $A$, then $\trace(B^k)=0$ for all $k\geq 1$, hence
    $\trace(\exp(B))=\trace(I_{d\times d})=d$; and if $\trace(\exp(B))
    =d$, then $\sum_{k\geq 1}\frac{1}{k!}\trace(B^k) =0$, hence
    $\trace(B^k)=0$ (due to~\eqref{eq:trb}), which implies that there are no
    cycles of any length $k\geq 1$ in the graph of $A$. 
$\hfill\square$ 

\paragraph{Proof of \Cref{prop:trexpm-divers}.}
(i) All entries of any matrix $\bar{A} \in \dom_+$ are nonnegative,
hence $\trace(A^k) \geq 0$ for all $k\geq 1$. Therefore, $\exptr[\bar{A}] =
\trace(I_{d\times d}) + \sum_{k\geq 1} \frac{1}{k} \trace(\bar{A}^k) \geq d$. 
Moreover, \Cref{thm:dag-hfunc} shows that $\trace(\bar{A}) =d $ if and only if
$\bar{A}$ is a DAG matrix.

(ii) First, we show the nonconvexity of $\exptr$ on $\reals^{2\times 2}$ (for
$d=2$) by using the property (i) and \Cref{thm:dag-hfunc}: 
let 
$A=\begin{pmatrix} 0 & 1 \\ 0 & 0\end{pmatrix}$ and $B = \begin{pmatrix} 0 &
        0 \\ 1 & 0 \end{pmatrix}$. Notice that $A$ and $B$ are two different
        DAG matrices. 
Then, consider any matrix on the line segment between $A$ and $B$: 
        \[
        M_{\alpha}= \alpha A + (1-\alpha) B = 
        \begin{pmatrix} 0 & \alpha \\ 1-\alpha & 0\end{pmatrix} \text{~for~}
            \alpha \in (0,1). \]
            Then $M_{\alpha}$ is a weighted adjacency matrix of graph with
            $2$-cycles. Hence, $M_{\alpha}$ is nonnegative and is not a DAG matrix, it follows from the property (i) above that $\tilde{h}(M_{\alpha}) > d$. Property (i) also shows that $\tilde{h}(A) = \tilde{h}(B) = d$, since $A$ and $B$ are DAG matrices. Therefore, we have 
            \[\tilde{h}(M_{\alpha}) > d = \alpha\tilde{h}(A) +
            (1-\alpha) \tilde{h}(B).\]
            Hence $\tilde{h}$ is nonconvex along the segment $\{M_{\alpha}=\alpha A + (1-\alpha)B: \alpha \in (0,1)\} \subset \reals^{2\times 2}$. 
        To extend this example to $\reals^{d\times d}$, it suffices
        to consider a similar pair of DAG matrices $A'$ and $B'$ which differ
        only on their first $2\times 2$ submatrices, such that $A'_{:2,:2} = A$
        and $B'_{:2,:2} = B$. 

(iii)
First, the following definition and property are needed for the differential calculus of the matrix exponential function.  
\begin{definition}\label{def:expm-diff}
    For any pair of matrices $A, B\in\reals^{d\times d}$, the commutator
    between $A$ and $B$ is defined and denoted as follows: 
    \begin{align*}
        \ad_A(B) := [A, B]  = AB - BA. 
    \end{align*}
    The operator $\ad_A$ is a linear operator. More generally, the powers of
    the commutator $\ad_A(\cdot)$ are defined as follows: $(\ad_A)^0 = I$, and
    for any $k\geq 1$: 
    \begin{align*}
    (\ad_A)^k(B) =
    \underbrace{[A,\dots,[A,[A,B]]\dots]}_{k\text{~nested~commutators}}.
    \end{align*}
\end{definition}

\begin{proposition}[\cite{haber2018notes}~(Theorem 2.b)]\label{prop:expm-diff}
    For any $A\in\reals^{d\times d}$ and any $\xi\in\reals^{d\times d}$, the
    derivative of $t\mapsto \exp(A+t\xi)$ at $t=0$ is %
    $\ddt \exp(A+t\xi)|_{t=0} = e^A \tilde{f}(\ad_A)(\xi)$, 
    where $\tilde{f}(z) =
    \frac{1-e^{-z}}{z}=\sum_{n=0}^\infty \frac{(-1)^n}{(n+1)!}z^n$. 
\end{proposition}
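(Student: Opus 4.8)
The plan is to reduce the statement to the standard integral (Duhamel) representation of the Fréchet derivative of $\exp$ and then rewrite that integral in terms of the adjoint operator $\ad_A$. First I would fix $\xi\in\reals^{d\times d}$ and set $Y(s,t)=\exp(s(A+t\xi))$, which solves the linear matrix ODE $\partial_s Y=(A+t\xi)Y$ with $Y(0,t)=I$. Differentiating this ODE in $t$ at $t=0$ and writing $Z(s)=\partial_t Y(s,t)|_{t=0}$ gives $Z'(s)=A Z(s)+\xi\, e^{sA}$ with $Z(0)=0$; variation of constants then yields
\[
\ddt\exp(A+t\xi)\big|_{t=0}=Z(1)=\int_0^1 e^{(1-s)A}\,\xi\, e^{sA}\,\mathrm{d}s.
\]
Justifying the interchange of $\partial_s$ and $\partial_t$ and the differentiation under the (everywhere convergent) exponential series is routine, since every object lives in the finite-dimensional space $\reals^{d\times d}$.

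The second step is to introduce the commutator. I would first establish the auxiliary identity $e^{uA}\,B\,e^{-uA}=e^{u\,\ad_A}(B)$ for all $u\in\reals$ and all $B\in\reals^{d\times d}$: the left-hand side $\phi(u)$ satisfies $\phi'(u)=A\phi(u)-\phi(u)A=\ad_A(\phi(u))$ with $\phi(0)=B$, so by uniqueness for this linear ODE on $\reals^{d\times d}$ we get $\phi(u)=e^{u\,\ad_A}(B)$. Factoring $e^A$ to the left inside the integrand via $e^{(1-s)A}\xi e^{sA}=e^A\big(e^{-sA}\xi e^{sA}\big)=e^A\, e^{-s\,\ad_A}(\xi)$ turns the integral into
\[
\ddt\exp(A+t\xi)\big|_{t=0}=e^A\left(\int_0^1 e^{-s\,\ad_A}\,\mathrm{d}s\right)(\xi).
\]

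Finally I would evaluate the operator integral. Since $\ad_A$ is a fixed linear endomorphism of the finite-dimensional space $\reals^{d\times d}$, its entire functional calculus is well defined, and $\int_0^1 e^{-s\,\ad_A}\,\mathrm{d}s=\tilde f(\ad_A)$, where $\tilde f(z)=\int_0^1 e^{-sz}\,\mathrm{d}s=\frac{1-e^{-z}}{z}=\sum_{n\ge0}\frac{(-1)^n}{(n+1)!}z^n$; the scalar identity propagates to the operator because both sides are given by the same everywhere-convergent power series evaluated at $\ad_A$. This produces $\ddt\exp(A+t\xi)|_{t=0}=e^A\tilde f(\ad_A)(\xi)$, as claimed.

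I expect the only delicate point to be the operator-valued manipulation in the last step: one must check that $\int_0^1 e^{-s\,\ad_A}\,\mathrm{d}s$ can be identified with $\tilde f(\ad_A)$ even when $\ad_A$ is singular (it always is, since $\ad_A(A)=0$), which is precisely why the removable-singularity form $\tilde f(z)=\sum_{n\ge0}\frac{(-1)^n}{(n+1)!}z^n$, rather than the quotient $\frac{1-e^{-z}}{z}$, is the correct object to insert into the functional calculus. Everything else—convergence of the exponential series, differentiation under the integral, and the Duhamel step—is standard precisely because all the operators in play act on the finite-dimensional space $\reals^{d\times d}$.
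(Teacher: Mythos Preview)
Your argument is correct and is the standard Duhamel-formula derivation of this identity. However, there is nothing to compare it against in the paper: \Cref{prop:expm-diff} is quoted without proof from \cite{haber2018notes} (Theorem~2.b) and used as a black box in the proof of \Cref{prop:trexpm-divers}(iii). Your proposal thus supplies precisely the argument that the cited reference contains, and the care you take with the removable singularity of $\tilde f$ at $z=0$ (needed because $\ad_A$ is never invertible) is exactly the right point to flag.
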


    Now, we calculate the directional derivative of $\tilde{h}$ along a direction
    $\xi\in\reals^{d\times d}$: Note that the differential of $\trace(\cdot)$
    is itself anywhere, hence by the chain rule and \Cref{prop:expm-diff}, for
    any $\xi\in\reals^{d\times d}$, the directional derivative of $\tilde{h}$
    along $\xi$ is 
        \begin{align}
            & \ddt \trace(e^{A+t\xi})|_{t=0} =
            \trace\Big(\ddt(e^{A+t\xi})|_{t=0} \Big)\nonumber\\
            & = \trace(e^A \tilde{f}(\ad_A)(\xi)) \nonumber \\
            &=\trace(e^A \xi) + \frac{1}{(k+1)!} \sum_{k=1}^\infty \trace(e^A
            (-1)^k(\ad_A)^k(\xi)). \label{eq:tr-ads} 
        \end{align}
        Next, we show that last term of~\eqref{eq:tr-ads} is zero: note that
        for $k=1$, by rotating the three matrices in the trace function, we
        have 
        \begin{align}
        &\trace(e^A (-\ad_A(\xi))) = \trace(e^A (\xi A - A\xi)) =  \trace(A
        e^A \xi - e^A A\xi) \nonumber \\
            & ~ = \trace(\ad_A(e^A) \xi) = 0 \label{eq:ad-ea} 
        \end{align}
        for any $\xi$, where~\eqref{eq:ad-ea} holds because $\ad_A(e^A) = [A, e^A] = 0$ since
        $A$ and $e^A$ commute (for any $A$). 
        Furthermore, we deduce that for all $k\geq 2$: 
        \begin{align}
            & \trace\big(e^A (-1)^k (\ad_A)^k(\xi)\big)= \nonumber\\
             & (-1)^k\trace\big(e^A \ad_A((\ad_A)^{k-1}(\xi))\big) = 0, 
            \nonumber %
        \end{align} 
        because~\eqref{eq:ad-ea} holds for any direction including
        $\tilde{\xi}:=(\ad_A)^{k-1}(\xi)$. 
        Therefore, the equation~\eqref{eq:tr-ads} becomes
        $\ddt\tilde{h}(A+t\xi)|_{t=0} = \trace(e^{A}\xi)$, and through the identification
        $\ddt\tilde{h}(A+t\xi)|_{t=0} = \trace(\trs[\xi]\nabla \tilde{h}(A))$,
        the gradient reads $\nabla\tilde{h}(A) = \trs[(\exp(A))]$.  
$\hfill\square$ 

\paragraph{Proof of \Cref{prop:wellp}.}
\label{sec-app:repres}
    From~\Cref{assp}, the residual matrix
    $\xi:= \splr^* - Z_0\in\dom$ satisfies $\infn[\xi] \leq \dparam
    \infn[Z_0]$. Therefore, by the Taylor expansion and
    \Cref{prop:trexpm-divers}-(iii), there exists a constant $C_1 \geq 0$  such
    that 
    \begin{align}
        & \exptr[\sigma(\splr^*)] -d = \underbrace{\exptr[\sigma(Z_0)]-d}_{=0} %
        + \trace(e^{\sigma(Z_0)}\sigma(\xi))  + C_1 \fro{\sigma(\xi)}^2  \nonumber \\ 
        & = \trace(e^{\sigma(Z_0)}\sigma(\xi))  + C_1 \fro{\sigma(\xi)}^2 \label{eq:h-pert}\\ 
        & \leq \infn[\xi] (\sum_{ij} [e^{\sigma(Z_0)}]_{ij}) +
         C_1 \|Z_0\|_0 \infn[\xi]^2 \nonumber \\
        & \leq \dparam \infn[Z_0] (\sum_{ij} [e^{\sigma(Z_0)}]_{ij}) +
            C_1 \|Z_0\|_0  {\dparam}^2 \infn[Z_0]^2, \nonumber %
    \end{align}
    where~\eqref{eq:h-pert} holds since $Z_0$ is a DAG matrix (in view of
    \Cref{thm:zheng18-thm1}), and $\||Z_0\|_0$ is the number of nonzeros
    of $Z_0$. %
    It follows that, with relative error $\dparam\leq 1$ and
    $\infn[Z_0] \leq 1$ (without loss of generality): 
\[  
\exptr[\sigma(\splr^*)] -d \leq  \dparam \Big(C_1 \|Z_0\|_0 +
\sum_{ij}[e^{\sigma(Z_0)}]_{ij} \Big) \infn[Z_0],\]
which entails the result. %
$\hfill\square$ 

\paragraph{Proof of \Cref{lemm:diffcalc-sigma}.}
    %
    The Hadamard product $\odot$ is commutative, hence 
    \begin{align*}
        & \dop\sigma_2(Y)[\xi] = \ddt (Y+t\xi)\odot(Y+t\xi) |_{t=0} =
        2Y\odot\xi. 
    \end{align*}
    $\hat{\dop}\abs$~(\ref{eq:def-po-abs}b) is a subdifferential of~\eqref{eq:def-po-abs} since the
    sign function is a subdifferential of 
    the function $z\to |z|$ and $\abs(\cdot)$ is an elementwise matrix
    operator. 
$\hfill\square$ 

%
%

\section{Details of computational methods} 

\subsection{Computational cost of the exact gradient}
\label{ssec-app:compgrad}

The computation of the function value and the gradient of
$h$~\eqref{eq:def-h-ours} mainly includes two types of operations: (i)
the computation of the $d\times d$ sparse matrix $\sigma\circ\splr(X,Y)$ given
\Cref{def:splr-rep} and $\sigma$, and (ii) the computation
of the matrix exponential-related terms~\eqref{eq:s-grad-h-odot}
or~\eqref{eq:s-grad-h-abs}.

The exact computation of the gradient $\nabla h(X,Y)$ (\Cref{thm:dag-hfunc}) is
as follows:

(i) compute $\splr=\po(X\trs[Y])$~\eqref{eq:def-splr-rep}: this step has a cost
of $2|\Omega|\rkval$ flops; see \Cref{alg:splr-prod} in \Cref{sec-app:algs}.
The output $\splr$ is a $d\times d$ sparse matrix. A byproduct of
        this step is $\dop\sigma(\splr)$ ($2\splr$ or $\ssgn(\splr)$); \\ %
    (ii) compute $\exp(\sigma(\splr))$: %
    this step has a cost of $O(d^3)$; \\ 
    (iii) compute $M:=\exp(\sigma(\splr)) \odot \dop\sigma(\splr)$, which costs
    $|\Omega|$ flops. The output $M$ is a $d\times d$ sparse matrix;  \\ 
    (iv) Compute the sparse-dense matrix multiplication $(M,Y)\mapsto MY$: this step costs $2|\Omega|\rkval$ flops.
Therefore, the total cost is $O(d^3 + 4|\Omega|r)$. 

In scenarios with large and sparse graphs, \ie, $\rkval \ll d$ and $|\Omega| =
\rho d^2$ with $\rho \ll 1$, the complexity of computing the exact gradient of
the learning criterion thus is cubic in $d$, incurred by computing the matrix
exponential in step (ii); this step is necessary due to the Hadamard product
that lies between the action of $\exp(\splr)$ and the thin factor matrix $X$
(respectively, $Y$) in~\eqref{eq:s-grad-h-odot} or~\eqref{eq:s-grad-h-abs}: one needs to compute the $d\times d$ Hadamard product (before the matrix multiplication with $X$), which requires computing explicitly the $d\times d$ matrix exponential $\exp(\sigma(\splr))$.

For this reason, the \ourmo\ as well as \notears-low-rank of~\cite{fang2020low}
face a cubic complexity for the exact computation of the gradient of $h$~\eqref{eq:def-h-ours}, despite the significant reduction in the model complexity. 
\subsection{Computation of the LoRAM matrix}\label{sec-app:algs}

\begin{algorithm}[htpb]
    \caption{LoRAM matrix representation\label{alg:splr-prod}} 
\begin{algorithmic}[1]
    \REQUIRE{Thin factor matrices $(X,Y)\in\prodsp$, index set $\Omega=(I,J)$} 
    \ENSURE{$Z = \po(X\trs[Y])\in\dom$ and by products 
    $Z^{\mathrm{abs}}=\ssgn(Z)$,
    $Z^{\mathrm{sq}} =2Z$} 
    \STATE{Initialize: $Z = 0$}
    \FOR{$s = 1,\dots, |\Omega|$} 
    \FOR{$k = 1,\dots, \rkval$}  
     \STATE {$Z_{I(s),J(s)} = Z_{I(s),J(s)} + X_{I(s), k} Y_{J(s), k}$ } 
    \ENDFOR
    \ENDFOR
\STATE{Element-wise operations: $Z^{\mathrm{sq}} = 2 Z$, $Z^{\mathrm{abs}} =
\ssgn(Z)$} 
\end{algorithmic}
\end{algorithm}

\section{Experiments} 
\label{sec-app:exps}

\subsection{Choice of the parameters $\lambda$ and $\dparam$}
\label{ssec-app:choice-lambda}
The proposed algorithms (Algorithms~\ref{alg:splr-expmv-inexa}--\ref{alg:solver-agd}) are tested within the
penalty method~\eqref{prog:projdag} for a parameter $\lambda>0$. 

In the context of projection problem~\eqref{prog:main-generic}, the following
features are notable factors that influence the choice of an optimal $\lambda$:
(i) the type of graphs underlying the input matrix $Z_0$, and (ii) sparsity of the graph matrix $Z_0$. 
Note that the scale of $Z_0$ is irrelevant to the choice of $\lambda$ since we rescale 
the input graph matrix $Z_0\in\dom$ with $Z_0 \leftarrow
\frac{c_0}{10\fro{Z_0}}Z_0$ for a constant $c_0 = 10^{-1}$, without
loss of generality (see \Cref{ssec:reliability}), such that the scale of $Z_0$ 
always stay in the level of $\frac{c_0}{10}$ (in Frobenius norm). 
Therefore, we are able to fix a parameter set $\Lambda$ regarding the  
following experimental settings: (i) the type of graphs is fixed to be the 
ER acyclic graph set, same as in~\cite{NEURIPS2018_e347c514}, and (ii) the
sparsity level of $Z_0$ is fixed around $\rho \in \{10^{-3}, 5.10^{-3}, 10^{-2},
5.10^{-2}\}$. 
Concretely, we select the penalty parameter $\lambda$, among $N_\lambda=5$ values in the fixed set $\Lambda =
\{1.0, 2.0, \dots, 5.0\}$, with respect to the objective function value of the
proximal mapping~\eqref{prog:projdag}. 

On the other hand, the choice of the hard threshold $\dparam$
in~\eqref{prog:hthres} is straightforward, according to \Cref{assp} for the relative error of \ourmo\ w.r.t. a given subset $\setdag^\star$ of DAGs. For the set of graphs in our experiments, we use a moderate value $\dparam=5.10^{-2}$, which has the effect of eliminating only the very weakly weighted edges.

\subsection{Additional experimental details}
\label{ssec-app:exp-details} 

\begin{figure}[htpb]
    \centering 
    \includegraphics[width=0.45\textwidth]{rhos_tprfdr_vr}
    \quad 
    \includegraphics[width=0.46\textwidth]{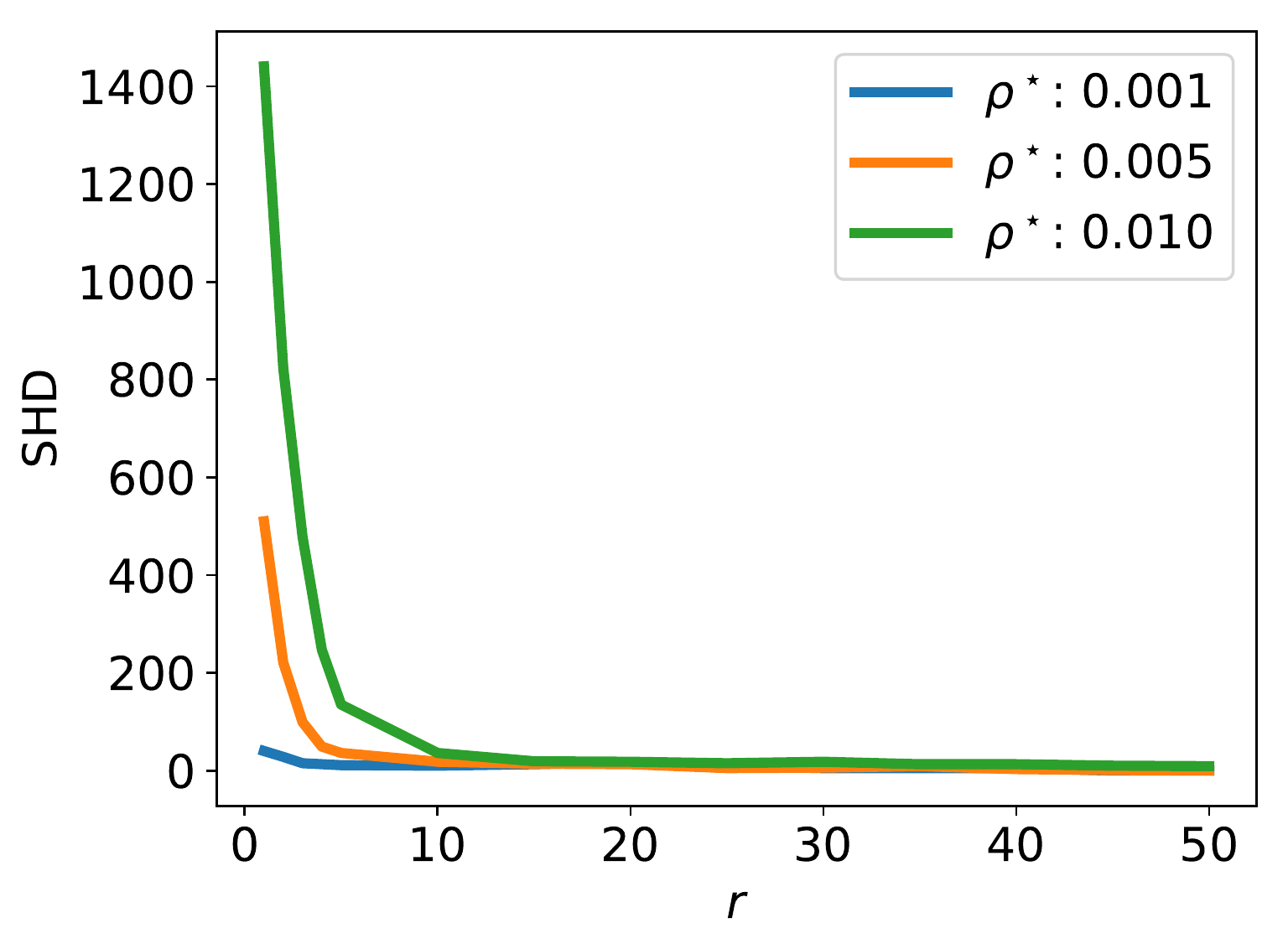}  
   \caption{
    Projection accuracies in TPR, FDR and SHD for different values of $r$ (number
    of columns of $X$ and $Y$ in~\eqref{eq:def-splr-rep}). The number of nodes
    $d=500$. The sparsity of the input non-DAG matrix $Z_0=A^\star +
    \sigma_{E}\trs[{A^\star}]$~\eqref{eq:def-zstar-2} are $\rho^\star \in
    \{10^{-3}, 5.10^{-3}, 10^{-2}\}$. 
    \label{fig-app:tsens-ranks}
    }
\end{figure}

\begin{table}[htpb]
\small 
\centering
\caption{Case (b): noise graph $E=\sigma
_E\trs[A^\star]$ creates cause-effect confusions, for $\sigma_E = 0.4$.
\label{tab-app:benchm-projdag-1}
(FDR, TPR, FPR, SHD) are the evaluation scores of the solution compared to the
ground truth DAG matrix $A^\star$.}
\begin{tabular}{c|c|c|ccccc}
\hline\hline
        Algorithm           & $(\lambda,\rkval)$  & $d$ & runtime       & FDR     & TPR   & FPR       & SHD \\ \hline              
\multirow{8}{*}{LoRAM}      & (5.0, 40)   & 100        & 1.82          & 0.0     & 1.0   & 0.0       & \bf{0.0}  \\  
                            & (5.0, 40)   & 200        & \bf{2.20 }    & 2.5e-2  & 0.98 & 5.0e-5          & \bf{1.0}  \\
                            & (5.0, 40)   & 400        & \bf{2.74 }    & 2.5e-2  & 0.98 & 5.0e-5          & \bf{4.0}  \\
                            & (5.0, 40)   & 600        & \bf{3.40 }    & 1.7e-2  & 0.98 & 3.4e-5          & \bf{6.0}  \\
                            & (5.0, 40)   & 800        & \bf{4.23 }    & 7.8e-3  & 0.99 & 1.6e-5          & \bf{5.0}  \\
                            & (2.0, 80)   & 1000       & \bf{7.63 }    & 0.0     & 1.0   & 0.0              & \bf{0.0}  \\
                            & (2.0, 80)   & 1500       & \bf{13.34}    & 8.9e-4  & 1.0   & 1.8e-6          & \bf{2.0}  \\
                            & (2.0, 80)   & 2000       & \bf{20.32}    & 7.5e-4  & 1.0   & 1.5e-6          & \bf{3.0}  \\
\hline
\multirow{8}{*}{NoTears}    & -           & 100       & \bf{0.67}      & 0.0      & 1.00      & 0.0     &   0.0  \\  
                            & -           & 200       &   3.64         & 0.0      & 0.95      & 0.0     &   2.0  \\
                            & -           & 400       &  16.96         & 0.0      & 0.98      & 0.0     &   4.0  \\
                            & -           & 600       &  42.65         & 0.0      & 0.96      & 0.0     &  16.0  \\
                            & -           & 800       &  83.68         & 0.0      & 0.97      & 0.0     &  22.0  \\
                            & -           & 1000      & 136.94         & 0.0      & 0.96      & 0.0     &  36.0  \\
                            & -           & 1500      & 437.35         & 0.0      & 0.96      & 0.0     &  94.0  \\
                            & -           & 2000      & 906.94         & 0.0      & 0.96      & 0.0     & 148.0  \\
                            
\hline\hline 
\end{tabular}
\end{table}

\begin{figure}[htpb]
\centering 
{\includegraphics[width=0.9\textwidth]{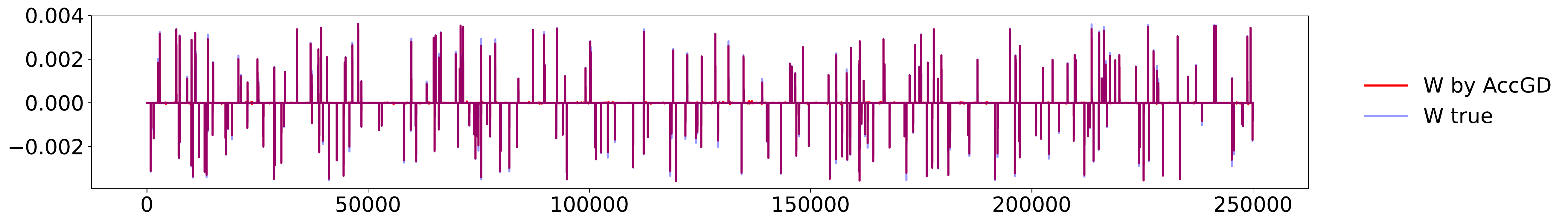}}
\caption{
First $2.5\times 10^5$ weighted edges of the solution (red) overlapping the edges of $A^\star$ (blue). %
Graph dimension $d=1000$, rank parameter $\rkval=80$. The recovery scores of the
solution are: TPR (higher is
better) $=1.0$, FPR (lower is better) $=0$, SHD (lower is better)) $=0$. 
The input graph matrix $Z_0=A^\star + \sigma_{E}\trs[{A^\star}]$.
\label{fig:iterhist-projdag-agd-1000}
}
\end{figure}

\end{document}